\documentclass[preprint]{article}
\usepackage{neurips_2026}
\usepackage[utf8]{inputenc} 
\usepackage[T1]{fontenc}    
\usepackage[colorlinks=true, linkcolor=red, citecolor=blue, urlcolor=blue]{hyperref}       
\usepackage{url}            
\usepackage{booktabs}
\usepackage{amsmath,amsthm, amssymb}
\usepackage{amsfonts}   
\newtheorem{definition}{Definition}
\newtheorem{theorem}{Theorem}
\newtheorem{lemma}{Lemma}

\newtheorem{remark}{Remark}

\newtheorem*{theorem*}{Theorem}
\usepackage{nicefrac}       
\usepackage{microtype}      
\usepackage{xcolor}
\usepackage{bm}
\usepackage{tikz}
\usepackage[scr]{rsfso}
\usepackage{pifont}
\newcommand{\cmark}{\ding{51}}
\newcommand{\xmark}{\ding{55}}
\usepackage{makecell}

\title{A Unified Framework for Quantized and Continuous Strong Lottery Tickets}

\author{%
  \textbf{Aakash Kumar} \\
  \textbf{Emanuele Natale} \\
  COATI, CNRS, Inria, I3S,\\
  Université Coté d’Azur, France\\
  \texttt{\{aakumar, maria-sofia.bucarelli, emanuele.natale\}@inria.fr}
}

\begin{document}

\maketitle

\begin{abstract}
  The Strong Lottery Ticket Hypothesis (SLTH) asserts that sufficiently overparameterized, randomly initialized neural networks contain sparse subnetworks that, even without any training, can match the performance of a small trained network on a given dataset. A key mathematical tool in the theoretical study of SLTH has been the Random Subset Sum Problem (RSSP). 
The SLTH has recently been extended to the quantized setting, where the network weights are sampled from a discrete set rather than from a continuous interval.
These new results are however far from those in arbitrary-precision setting in several ways.
In this work, we provide an analysis of the RSSP in the discrete setting, and use it to derive tight SLTH guarantees in the quantized case. 
Our analysis obtain tight bounds on the failure probability of finding a strong lottery ticket in the quantized regime, providing an exponential improvement over previous results. 
Most importantly, it unifies the literature by showing that both approximate representations in the continuous setting and exact representations in quantized settings naturally emerge as limiting cases of our results. This perspective not only sharpens existing bounds but also provides a cohesive framework that simultaneously handles approximation and rounding errors.
\end{abstract}

\section{Introduction}
\label{sec:intro}
Deep neural networks (DNNs) have achieved remarkable success across a wide range of machine learning tasks. However, their rapidly increasing size and complexity introduce significant challenges for both training and deployment. This has motivated extensive research on neural network pruning as a means to reduce model size and computational cost while preserving performance.
One prominent theoretical framework in this direction is the Strong Lottery Ticket Hypothesis (SLTH), which posits that sufficiently overparameterized, randomly initialized neural networks contain sparse subnetworks—so-called strong lottery tickets—that can match the performance of a smaller trained network on a given task, without any training. We refer the reader to Section \ref{sec:rel} for details on the SLTH literature. The emergence of SLTH has motivated numerous results demonstrating that a sufficiently overparameterized network can be pruned to simulate a given smaller “target” network\footnote{The target network is merely a proof artifact and is not known in practice; in fact, identifying such a network is NP-hard.} \citep{SLTH_main,orseauLogarithmicPruningAll2020,slthproof,pensia, Burkholz, multiprize, Sreenivasan, QuantizedSLTH}.
A central combinatorial problem underlying the SLTH theory is the Random Subset Sum Problem (RSSP) \citep{Lueker1998}, which has been used to quantify the overparameterization required for a large random network to contain such performant subnetworks.

Most prior analyses in SLTH assume arbitrary precision for network weights, this is largely because the classical theory of RSSP has been developed in a continuous setting. Nevertheless, several works have addressed the finite-precision regime. In particular, \cite{multiprize,Sreenivasan} consider versions of the problem where networks are restricted to finite-precision weights. \cite{multiprize} demonstrates—both \textbf{theoretically and empirically}—that sufficiently overparameterized networks with binary weights can approximate any network of a given size. Building on this, \cite{Sreenivasan} further strengthened the theoretical results (see Section~\ref{sec:rel}). However, these works remain limited in scope, since the target networks still use continuous weights while the approximating networks are constrained to binary ones.
Subsequently, by leveraging elegant results by \cite{Borgs2001} on the 
Number Partitioning Problem (NPP), \cite{QuantizedSLTH} has managed to overcome such restriction, obtaining a partial extension of the RSSP theory to the quantized regime, where both the initial and target networks have weights drawn from discrete sets. 
Specifically, \cite{QuantizedSLTH} leveraged a formal connection between the RSSP and the Number Partitioning Problem (NPP), obtaining bounds for the exact representation of a quantized target network.
A key limitation of such analysis is that it yields only an inverse-polynomial decay of the failure probability. This should be contrasted with the inverse-exponential decay of previous SLTH results in the continuous setting \citep{pensia,orseauLogarithmicPruningAll2020}.
Moreover, previous SLTH literature quantified the success probability in terms of the admissible approximation error $\epsilon$ of the subnetwork, whereas \cite{QuantizedSLTH} only provided an estimate of the probability of finding an exact subnetwork, leaving open the question of how these guarantees extend to approximate representations. Our goal is to provide stronger theoretical guarantees that support the existing empirical observations in this direction, and to unify the continuous and quantized lines of research.
\begin{table}[h!]
    \label{tab:comp}
    \centering
    \begin{tabular}{lcccc}
    \hline
    \textbf{Paper} & \textbf{Quantized} & \textbf{Approx.} & \textbf{\makecell{Exact \\ Representation}} & \textbf{\makecell{Exponential\\ Probability}} \\
    \hline
    \cite{slthproof}  & \xmark & \cmark & \xmark & \xmark \\
    \cite{pensia}  & \xmark & \cmark & \xmark & \cmark \\
    \cite{orseauLogarithmicPruningAll2020}   & \xmark & \cmark & \xmark & \cmark\\
    \cite{multiprize}   & \cmark & \cmark & \xmark & \xmark\\
    \cite{QuantizedSLTH} & \cmark & \xmark & \cmark & \xmark \\
    Ours             & \cmark & \cmark & \cmark & \cmark \\
    \hline
    \end{tabular}
    \caption{A qualitative comparison to prior work: Our results simultaneously cover approximate representations in the continuous setting and exact representations in the quantized setting, with previous results arising as special cases of our framework. In addition, we establish an exponentially decaying failure probability in the quantized regime. There are certain caveats to this taxonomy; a detailed justification for the classification is provided in Appendix \ref{app:taxonomy}.}
\end{table}
\paragraph{Our contributions.}
We close the aforementioned gaps by proving new, sharp bounds for the discrete RSSP that are tailored to the quantized SLTH setting. Specifically, we generalize the results of \cite{RSSChuna} on the continuous RSSP to the discrete setting, allowing us to establish quantized SLTH results with exponentially small failure probability in the number of precision bits—an exponential improvement over the earlier inverse-polynomial bound.
Furthermore, our analysis unifies previous results by simultaneously handling both rounding and approximation errors, making a fundamental step towards broadening the practical relevance of the SLTH theory. More importantly, both approximate representations in the continuous setting, as in \cite{slthproof, pensia, orseauLogarithmicPruningAll2020}, and exact representations in a quantized setting, as in \cite{multiprize, Sreenivasan, QuantizedSLTH}, arise as special cases in the limit of our results.
Our results can be summarized by the following simplified,
informal theorem (refer to the formal statements for full generality).

\begin{theorem*}[Informal version of Theorem \ref{thm:main}]
    With high probability, a depth‑$2\ell$ randomly-initialized network $N_{\text{in}}$ of width $\mathcal{O}\bigl(d \log(1/\delta)\bigr)$ whose weights are represented using $\log 1/\delta$ bits of precision can be pruned to $\epsilon$ approximate any target network $N_t$ with $\ell$ layers of width at most $d$.
\end{theorem*}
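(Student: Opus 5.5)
The informal theorem follows the standard SLTH template of \cite{pensia}: reduce the approximation of each target weight to a subset-sum problem, and use a discrete version of the RSSP in place of the continuous one. Concretely, we need a discrete analogue of the result of \cite{RSSChuna}. Let $X_1,\dots,X_n$ be i.i.d.\ uniform on the lattice $\delta\mathbb{Z}\cap[-1,1]$. Then with probability at least $1-\exp(-\Omega(n))$, every target $z\in\delta\mathbb{Z}\cap[-1,1]$ is an exact subset sum, provided $n\ge C\log(1/\delta)$. More generally, every $z\in[-1,1]$ is $\epsilon$-approximated whenever $n\ge C\log(1/\max(\epsilon,\delta))$.

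I would prove this in two stages, following the second-moment/Fourier strategy of \cite{RSSChuna}.

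\emph{Stage one: a single fixed target.} Let $S_z$ be the number of subsets summing to $z$. Its mean is $\mathbb{E}S_z=2^n\Pr[\sum_i\sigma_iX_i=z]$, where the $\sigma_i$ are uniform random subset indicators. A local limit theorem on the lattice $\delta\mathbb{Z}$ shows this is of order $2^n\delta/\sqrt n$. For the second moment, $\mathbb{E}S_z^2$ is controlled by pairs of subsets. Pairs with small overlap behave almost independently, so Paley--Zygmund gives $\Pr[S_z>0]\ge c$.

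\emph{Stage two: all targets simultaneously, with exponential probability.} Split the $n$ samples into a first block used to reach a coarse grid and a second block used to refine. Then boost the success probability by taking $\Theta(\log(1/\delta))$ independent blocks and a union bound over the $2/\delta+1$ lattice targets. This union bound is what forces $n=\Theta(\log(1/\delta))$. For approximate targets, round $z$ to the nearest lattice point; this costs an error of at most $\delta/2$. If $\epsilon>\delta$, work on the coarser lattice $\epsilon\mathbb{Z}$ through groupings. This is how both the continuous limit $\delta\to0$ and the exact quantized case $\epsilon\to0$ emerge.

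\emph{Network step.} Given the discrete RSSP lemma, the construction is the usual one. Each target layer $W$ is simulated by two random layers, using the decomposition $x=\sigma(x)-\sigma(-x)$, so that positive and negative parts are handled separately. For each target weight $w_{ij}$, we use a block of $n=O(\log(d\ell/\epsilon)+\log(1/\delta))$ intermediate neurons. Pruning selects the subset whose products $u_kv_k$ sum to $w_{ij}$. This requires the RSSP lemma for products of two lattice-uniform variables, which I would handle by conditioning on the signs and arguing that the product distribution still has a density on the lattice bounded below. The resulting width is $O(d\log(1/\delta))$ per layer, and the depth is $2\ell$.

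\emph{Error propagation.} A union bound over the $d^2\ell$ weights then gives the overall failure probability. Layerwise error propagation with Lipschitz ReLU and bounded norms, as in \cite{pensia}, turns per-weight error $\epsilon/(d\ell)$ into output error $\epsilon$.

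\emph{Main obstacle.} I expect the hard part to be the second-moment bound on the lattice. Unlike the continuous case, the characteristic function of a lattice variable is periodic. So the Fourier integral must be bounded away from the aliasing peaks at multiples of $2\pi/\delta$, uniformly in $z$, including targets near $\pm1$ where few subsets land. I would first try to bound $|\varphi(t)|\le 1-ct^2\delta^2$ on the fundamental domain, and to treat the boundary targets separately by a symmetry or shifting argument.
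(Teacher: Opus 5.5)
\paragraph{The gap: amplification in Stage two.}
Your network step matches the paper's: the sign split $wx=\sigma(wx)-\sigma(-wx)$, block-diagonal pruning, union bounds over weights, and Lipschitz error propagation. The problem is that the discrete RSSP lemma, proved the way you plan, does not give $n=\Theta(\log(1/\delta))$.

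Paley--Zygmund gives you only a constant per-target success probability $c$. This is essentially all the unconditional second moment can give here. Conditionally on the sample, the subset sums are centred at $\frac12\sum_i X_i$, a random shift of the same order $\sqrt n$ as their spread. So $S_z/\mathbb{E}S_z$ does not concentrate, and $\mathbb{E}S_z^2/(\mathbb{E}S_z)^2$ stays bounded away from $1$.

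Boosting by independent blocks then costs a full extra logarithm:
\begin{itemize}
\item A block of $m$ samples hits a fixed nonzero lattice point with probability at most $\mathbb{E}S_z\le 2^m\max_x\mathbb{P}(X=x)\le 2^m\delta/2$. So each block needs $m\ge\log_2(1/\delta)-O(1)$ samples to succeed with constant probability.
\item Making $(2/\delta+1)(1-c)^k$ small needs $k=\Theta(\log(1/\delta))$ blocks.
\end{itemize}
Altogether $n=km=\Theta(\log^2(1/\delta))$, which means width $O(d\log^2(1/\delta))$ rather than $O(d\log(1/\delta))$. The failure probability is also only $\exp(-\Omega(n/\log(1/\delta)))$, not $\exp(-\Omega(n))$. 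The ``coarse grid then refine'' split does not rescue this in the exact lattice setting. Hitting a data-dependent residual exactly is again a simultaneous-coverage problem for all targets.

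\paragraph{The missing idea.}
It is the one the paper discretizes from \cite{RSSChuna}, which is an elementary recurrence argument, not a second-moment/Fourier one. You use Lueker's recurrence $f_{t+1}(z)=f_t(z)+(1-f_t(z))f_t(z-X_{t+1})$ for the indicator that $z$ is $\Delta$-approximable. You then track the volume $v_t=\sum_{z=-M}^{M}f_t(z)$, which handles all targets at once. The steps are:
\begin{itemize}
\item One gets $\mathbb{E}(v_{t+1}\mid X_1,\dots,X_t)\ge v_t+\frac{v_t}{4}\bigl(1-\frac{v_t}{2M}\bigr)$.
\item By reverse Markov, $v_t$ grows by a factor $1+\beta$ with constant probability at each step until it passes $M+1$. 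A Hoeffding bound on a sum of geometric variables shows this phase takes $O\bigl(\log\frac{M+1}{\Delta+1}\bigr)$ steps.
\item After that, the uncovered mass $w_t=2M+1-v_t$ contracts by $7/8$ in expectation, and Markov's inequality finishes.
\end{itemize}
No union bound over targets is ever taken. That is exactly what yields $n=O\bigl(\log\frac{1/\delta+1}{\Delta+1}\bigr)$ with failure probability $\exp(-\Omega(n))$. The tolerance $\Delta$ also interpolates between the exact and approximate regimes without your separate rounding/coarse-lattice step.

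Separately, your treatment of the products $u_kv_k$ needs repair. Their law has no density bounded below on the $\delta^2$-lattice: for example, $p\delta^2$ with $p>1/\delta$ prime is not a product of two elements of $S_\delta$. You need instead to extract a uniformly distributed sub-component; the paper's route is a mixture/rejection-sampling decomposition.
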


Table \ref{tab:comp} presents a comparison of our results with previous works. These results advance the theoretical understanding of quantized strong lottery tickets in several ways:
\begin{enumerate}
    \item they extend the scope of SLTH theory to include both quantized and $\epsilon$-approximate representations, 
    \item they strengthen previous probabilistic guarantees from inverse-polynomial to exponential decay, and
    \item both approximate representations in the continuous setting, and exact representations in a quantized setting, can be obtained as special cases in the limit of our results.
\end{enumerate}

\paragraph{Paper organization.}
In Section~\ref{sec:rel}, we review prior work on SLTH and the role of the RSSP in its theoretical foundations.
In Section~\ref{sec:DiscreteRSS}, we extend the analysis of \cite{RSSChuna} to the quantized RSSP and prove our main probabilistic bounds.
In Section~\ref{sec:QuantSLTH}, we use these results to establish new SLTH guarantees in the quantized setting, including both exact and $\epsilon$-approximate cases. We provide experimental validation to our results in Section~\ref{sec:expt}.
We conclude in Section~\ref{sec:conc} with a discussion of implications, and future directions.

\section{Related Work}
\label{sec:rel}
The Lottery Ticket Hypothesis (LTH) was first proposed by \cite{LTH}, who showed that every dense neural network contains a sparse subnetwork that can be trained from scratch to achieve the same accuracy as the original dense model.
Soon after, a series of surprising empirical results \citep{SLTH_main, ramanujan, pruningscratch} demonstrated that one can find subnetworks within large randomly initialized neural networks that already perform well on a given task, without any weight updates. This line of work led to the Strong Lottery Ticket Hypothesis (SLTH), which posits that sufficiently overparameterized random networks contain sparse subnetworks that approximate small trained networks without training.

Theoretical progress on the SLTH began with \cite{slthproof}, who proved that any feed-forward network of depth $\ell$ and width $d$ can be approximated by pruning a random network of depth $2\ell$ and width $\mathcal{O}(d^5\ell^2)$. Subsequent works \citep{orseauLogarithmicPruningAll2020, pensia} improved this to $\mathcal{O}(d\log(d\ell))$, while \cite{Burkholz} established a different construction showing that a depth-$(\ell+1)$ network suffices to approximate a depth-$\ell$ target, with certain trade-offs in the width. Further extensions broadened the scope of the SLTH to convolutional architectures \citep{burkholzConvolutionalResidualNetworks2022} and equivariant networks \citep{ferbachGeneralFrameworkProving2022}.

A complementary direction of research investigates quantization, the process of reducing weight precision. Empirical studies \citep{Han} demonstrated that trained networks can often be quantized significantly without loss of accuracy, and \cite{multiprize} provided both empirical and theoretical evidence for a quantized SLTH, introducing binary subnetworks that approximate real-valued networks. They showed that a network of width $d$ and depth $\ell$ can be approximated within error $\epsilon$ by a binary network of width $\mathcal{O}(\ell d^{3/2}/\epsilon + \ell d\log(\ell d/\epsilon))$ and depth $2\ell$.
Later, \cite{Sreenivasan} exponentially improved these bounds, showing that binary networks of depth $\Theta(\ell \log(d\ell/\epsilon))$ and width $\Theta(d \log^2(d\ell/\epsilon))$ suffice. Recent work has also connected quantized subnetworks to universal approximation guarantees \citep{hwang2024expressive}, while practical advances such as mixed-precision quantization \citep{AutomaticMixedPrecision, GentleIntroduction8bit} explore more hardware-efficient designs.

Most recently, \cite{QuantizedSLTH} established the first sharp theoretical guarantees for the quantized SLTH by leveraging connections between the RSSP and the Random Number Partitioning Problem (RNPP). They showed that pruning can yield exact quantized subnetworks, and characterized the optimal trade-off between overparameterization and weight precision. However, their failure probability bounds decay only inverse-polynomially, and their analysis was limited to exact representations—gaps that motivate the present work.

\paragraph{Subset Sum Problem (SSP).}
The SSP is a classical NP-complete problem in computational complexity \citep{Garey1979-ol}, where the task is to decide whether a given target value $z$ can be expressed as the sum of a subset of a given set of numbers. Its random variant, the Random Subset Sum Problem (RSSP), has been extensively investigated in the context of combinatorial optimization \citep{luekerAverageDifferenceSolutions1982, Lueker1998, borstIntegralityGapBinary2023}.
More recently, \cite{RSSChuna} provided a simpler and more elementary proof of these classical results, while \cite{QuantizedSLTH} extended the analysis to the discrete setting, building on the seminal work of \cite{Borgs2001} on the Number Partitioning Problem. The relevance of the RSSP to the theory of the Strong Lottery Ticket Hypothesis has since been highlighted in several works \citep{pensia, Burkholz, QuantizedSLTH}.

\section{Discrete Random Subset Sum}
\label{sec:DiscreteRSS}
The \textbf{Subset Sum Problem} is a fundamental problem in computational complexity theory. It is a well-known \textbf{NP-complete} problem \cite{Garey1979-ol}. Its randomized variant, the \textbf{Random Subset Sum Problem (RSSP)}, has recently attracted attention in the machine learning community, due to its application in the theory of \textbf{SLTH} \citep{pensia, Burkholz, QuantizedSLTH}. A key result in the theory of the RSSP was established by \cite{Lueker1998}, stated as Theorem \ref{thm:lueker}.

\begin{theorem}[\cite{Lueker1998}]
    \label{thm:lueker}
        Consider the set of random variables $X_1,X_2,\dots,X_n$ sampled uniformly from $U[-1,1]$. If $n>C\log\left(\frac{1}{\epsilon}\right)$, then with probability at least $1-\epsilon$, for any $z\in \left[-\frac{1}{2},\frac{1}{2}\right]$, there exists $S\subseteq [n]$ such that
    \begin{equation*}
        \Bigg|\sum_{i\in S} X_i - z\Bigg|\leq \epsilon.
    \end{equation*}
\end{theorem}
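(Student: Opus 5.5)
I would follow the elementary second-moment approach of \cite{RSSChuna}: reduce the ``for all $z$'' statement to a finite net, and for each fixed target use a second-moment bound together with a boosting step. Cover $[-\tfrac12,\tfrac12]$ by a grid of $O(1/\epsilon)$ points spaced $\epsilon$ apart. If every grid point $z_j$ is $\epsilon/2$-approximated by some subset sum, then every $z$ is $\epsilon$-approximated. It therefore suffices to show that, for a fixed $z$, the failure probability is at most $\epsilon^{2}$ (or $\exp(-cn)$ with a large enough constant), and then take a union bound over the $O(1/\epsilon)$ grid points.

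\textbf{Fixed target.} Restrict to subsets of size exactly $k=n/2$, or use a small window of sizes around it, and let
\[
Y=\sum_{|S|=k}\mathbf 1\Bigl\{\bigl|\textstyle\sum_{i\in S}X_i-z\bigr|\le \epsilon/2\Bigr\}.
\]
For a single $S$, the density of $\sum_{i\in S}X_i$ on $[-\tfrac12,\tfrac12]$ is $\Theta(1/\sqrt k)$, so $\mathbb E[Y]\approx\binom nk\epsilon/\sqrt k$. For the second moment, condition on the overlap $|S\cap S'|=t$. The sums over $S$ and $S'$ share $t$ terms, and conditioning on the shared part leaves two independent sums of $k-t$ terms each. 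This gives
\[
\Pr[\text{both}]\lesssim \epsilon^2/(k-t)
\]
when $t<k$, and $\epsilon/\sqrt k$ when $t=k$. Summing $\binom kt\binom{n-k}{k-t}$ against these bounds gives $\mathbb E[Y^2]\le C(\mathbb E[Y])^2+\mathbb E[Y]$. Paley--Zygmund then yields $\Pr[Y>0]\ge c>0$ once $2^n\epsilon\gg1$, i.e.\ for $n\ge C\log(1/\epsilon)$.

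\textbf{Boosting to high probability.} Split the $n$ variables into $m$ disjoint blocks of size $n_0=C_0\log(1/\epsilon)$. Use the blocks greedily to shrink the residual: block $j$ approximates the current residual $z-\sum_{\text{earlier}}$ to within a constant factor, say $1/4$, with constant probability. Alternatively, run independent attempts on disjoint blocks, each using a fresh random ``offset'' subset to keep the residual in $[-\tfrac12,\tfrac12]$. The cleanest version: the attempts are independent, each succeeds with probability at least $c$, so the failure probability is at most $(1-c)^m=\epsilon^{\Omega(1)\cdot m}$. With $m$ a constant this gives failure at most $\epsilon^{3}$ per grid point, and $n=m n_0=O(\log(1/\epsilon))$.

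\textbf{Main obstacle.} The hard part is the second-moment computation. The near-full-overlap terms ($t$ close to $k$) must be controlled, since there the two sums are highly correlated. I would handle this with the local bound that the difference of the two sums, a sum of $2(k-t)$ independent uniforms, has density $O(1/\sqrt{k-t})$. This makes those terms contribute at most $O(\mathbb E[Y])\cdot\mathrm{poly}(k)/\binom{n-k}{k-t}$-type quantities, which are negligible relative to $(\mathbb E[Y])^2$ once $2^n\epsilon\ge \mathrm{poly}(n)$. The density estimates rely on the uniform distribution having bounded density, and on Berry--Esseen or an explicit Irwin--Hall bound to get the lower density $\Omega(1/\sqrt k)$ near $z$.
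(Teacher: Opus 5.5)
\textbf{The gap is in the boosting step.} Your fixed-target second-moment estimate is fine as far as it goes. Conditioning on the shared part gives $\Pr[\text{both}]\le C\epsilon^2/(k-t)$, and the hypergeometric average of $k/(k-t)$ over $t<k$ is $O(1)$, so the near-full-overlap terms are not really the obstacle. Paley--Zygmund then gives $\Pr[Y>0]\ge c$ once $2^n\epsilon/n\gtrsim 1$. The grid reduction is also correct.

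The problem is the amplification. Independent attempts that each succeed with probability $c$ fail together with probability $(1-c)^m$. For constant $m$ this is a constant, so the identity $(1-c)^m=\epsilon^{\Omega(1)\cdot m}$ is false. Getting below $\epsilon^{2}$ per grid point needs $m=\Theta(\log(1/\epsilon))$ disjoint blocks, each of size $\Theta(\log(1/\epsilon))$. That gives $n=\Theta(\log^2(1/\epsilon))$, which is weaker than the statement.

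The greedy residual variant has the same cost. The $X_i$ live on scale $1$, so any nonempty subset sum of a fresh block lands in a window of length $\rho/2$ around a residual of size $\rho$ with probability $O(\rho)$. Shrinking the residual by a constant factor at scale $\rho$ therefore requires $\Omega(\log(1/\rho))$ new variables. Summed over the $\Theta(\log(1/\epsilon))$ scales, this is again $\Theta(\log^2(1/\epsilon))$.

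Nor can the second moment be sharpened into a Chebyshev bound. The ratio $\mathbb{E}[Y^2]/(\mathbb{E}Y)^2$ tends to a constant strictly above $1$ (about $2/\sqrt{3}$ for $k=n/2$). The reason is that the density of subset sums at $z$ depends on the random shift $\tfrac12\sum_i X_i$, which is of the same order as their spread. So $Y$ has order-one relative fluctuations.

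\textbf{The missing idea} is to extract constant-probability progress from every single sample, not from every block of $\Theta(\log(1/\epsilon))$ samples. The paper does not prove this theorem itself; it cites \cite{Lueker1998} and \cite{RSSChuna}. Its discrete analogue, Theorem~\ref{thm:rss_main}, follows \cite{RSSChuna}, whose argument is not a second-moment one, so your attribution is off.

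That argument tracks the measure $v_t$ of the set of approximable targets and uses the recurrence $f_{t+1}(z)=f_t(z)+(1-f_t(z))f_t(z-X_{t+1})$. In words, each new sample translates the whole current approximable set. This gives the drift bound of Lemma~\ref{lemma:EVt}, $\mathbb{E}[v_{t+1}\mid X_1,\dots,X_t]\ge v_t+\frac{v_t}{4}\bigl(1-\frac{v_t}{2M}\bigr)$. Hence, while $v_t$ is below half the range, it grows by a factor $1+\beta$ with constant probability, via reverse Markov and $v_{t+1}\le 2v_t$.

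The first phase is then dominated by a sum of $O(\log(1/\epsilon))$ geometric variables, and a binomial Hoeffding bound gives an $\exp(-\Omega(n))$ tail. In the second phase the uncovered measure contracts by a constant factor in expectation at each step. Together these give failure probability $e^{-\Omega(n)}$ for all targets simultaneously, with no grid or union bound needed.
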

This result has played a central role in all the SLTH results discussed in Section \ref{sec:rel}. An elementary proof of Theorem \ref{thm:lueker} was provided by \cite{RSSChuna}. In this section, we present a discrete variant of their proof, which will be particularly important for our analysis of quantization in the context of the SLTH in Section \ref{sec:QuantSLTH}. We start by defining RSSP in the discrete setting.
\begin{definition}
    Consider the set of random variables $X_1,X_2,\dots,X_n$ sampled uniformly from $\{-M,-M+1,\dots,M-1,M\}$ for some positive integer $M$ and target $z\in\{-M,-M+1,\dots,M-1,M\}$. Let $\Delta\in\mathbb{N}\cup\{0\}$ be given. The RSSP is the problem of finding a subset $S\subseteq [n]$ such that
    \begin{equation*}
        \Bigg|\sum_{i\in S} X_i - z\Bigg|\leq \Delta.
    \end{equation*}
\end{definition}
Let $f_t:\mathbb{Z}\to\{0,1\}$ be the indicator for the event ``$z$ is $\Delta$ approximated" at time $t\in\mathbb{N}$, i.e.,
\begin{equation*}
    f_t(z) = \mathbb{I}_{\exists S\subseteq [t]\;:\;|\sum_{i\in S}X_i-z|\leq \Delta},
\end{equation*}
with $f_t(0)=1$ by definition. One of the fundamental properties of $f_t$, first observed by \cite{Lueker1998}, is that it satisfies the following recurrence relation
\begin{equation}
\label{eq:rss_recurence}
    f_{t+1}(z)=f_{t}(z)+(1-f_t(z))f_t(z-X_{t+1}).
\end{equation}
Define the volume $v_t$ as
\begin{equation*}
    v_t=\sum_{i=-M}^M f_t(i).
\end{equation*}
The objective of our analysis is to understand the growth of $v_t$ as a function of $t$. In particular, we are interested in the time at which $v_t$ reaches $2M+1-\Delta$, since this marks the point where the entire set $\{-M,\dots,M\}$ can be $\Delta$-approximated. We denote by $\tau$ the first time that the process $v_t$ reaches at least $2M+1-\Delta$.
\begin{theorem}
    \label{thm:rss_main}
    Let $\Delta\in \{0,\dots,M\}$.  There exist constants $C>0$ and $\kappa>0$ such that for every $t\geq C \log \frac{M+1}{\Delta+1}$, it holds that
    \begin{equation*}
        \mathbb{P}(\tau\leq t) \geq 1-2\exp\left[-\frac{1}{\kappa t}\left(t-C\log\frac{M+1}{\Delta+1}\right)^2\right].
    \end{equation*}
\end{theorem}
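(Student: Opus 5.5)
We follow the strategy of \cite{RSSChuna}: split the evolution of $v_t$ into two phases, show that in each step $v_t$ is multiplied by a factor bounded away from $1$ in expectation, and conclude with a martingale concentration argument for the logarithm of the relevant quantity. The starting point is the recurrence \eqref{eq:rss_recurence}. Summing it over $z\in\{-M,\dots,M\}$ gives
\begin{equation*}
v_{t+1}=v_t+\sum_{z}(1-f_t(z))f_t(z-X_{t+1}),
\end{equation*}
so the increment is the number of points of the shifted set $\{f_t=1\}+X_{t+1}$ that land in $[-M,M]$ outside the current set.

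Because $X_{t+1}$ is uniform on $\{-M,\dots,M\}$, we can average over the shift. The expected increment equals
\begin{equation*}
\frac{1}{2M+1}\sum_{x}\#\{(z,z') : f_t(z)=0,\ f_t(z')=1,\ z-z'=x,\ z\in[-M,M]\}.
\end{equation*}
This is essentially $\frac{1}{2M+1}$ times the number of pairs (uncovered point, covered point) whose difference lies in $[-M,M]$. Write $u_t=2M+1-v_t$.

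In the \emph{first phase}, $v_t\le M$. We aim to show that a constant fraction of the pairs qualify, giving
\begin{equation*}
\mathbb{E}[v_{t+1}\mid\mathcal{F}_t]\ge (1+c)\,v_t .
\end{equation*}
The idea is that the covered set always contains $0$ and is, up to the $\Delta$-thickening, symmetric in structure. For a covered $z'$, the shifts $z'+x$ with $x\in[-M,M]$ sweep a window of length $2M+1$, of which at least about half lies in $[-M,M]$. Since at most $v_t\le M$ of those points are covered, at least a constant times $M$ of them are uncovered targets.

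In the \emph{second phase}, $u_t$ is small. The same counting shows
\begin{equation*}
\mathbb{E}[u_{t+1}\mid\mathcal{F}_t]\le(1-c)\,u_t .
\end{equation*}
The $\Delta$-approximation enters through the initialization and the stopping level. Because of the $\Delta$-thickening, $v_0\ge \Delta+1$ (all $|z|\le\Delta$ are covered), and we only need $u_t\le\Delta$. Hence the first phase needs about $\log\frac{M+1}{\Delta+1}$ multiplicative steps, and the second needs another $\log\frac{M+1}{\Delta+1}$ steps to go from $u\approx M$ down to $u\le \Delta$. This accounts for the $C\log\frac{M+1}{\Delta+1}$ term.

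To get concentration, we turn expectation bounds into bounds on logarithms. We need per-step multiplicative increments that are bounded and have a positive drift in log. The cleanest route, as in \cite{RSSChuna}, is to show that with probability at least a constant $p$ the step multiplies $v_t$ by at least $(1+c)$ (respectively $u_t$ by at most $(1-c)$), and that it never decreases. We then dominate the number of ``good'' steps by a Binomial$(t,p)$. The process fails to finish by time $t$ only if fewer than $C'\log\frac{M+1}{\Delta+1}$ good steps occur in each phase. By a Chernoff/Hoeffding bound each phase fails with probability at most
\begin{equation*}
\exp\left[-\frac{(pt-C'\log\frac{M+1}{\Delta+1})^2}{\kappa t}\right],
\end{equation*}
and a union bound over the two phases gives the factor $2$. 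We absorb constants by setting $C=C'/p$ (splitting $t$ in half between the phases if convenient).

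The main obstacle is the discrete counting lemma: showing that the increment is at least $c\,v_t$ (resp.\ $c\,u_t$) with constant probability, uniformly in $M$ and $\Delta$. Integrality causes boundary effects when $v_t$ or $u_t$ is $O(1)$; these are handled by the fact that any nonzero improvement is at least $1$. There is also an edge effect because sums can leave $[-M,M]$. I would first try to prove a Paley–Zygmund style bound: compute the second moment of the increment (it is at most $\min(v_t,u_t)$), and combine it with the expectation lower bound $\ge c\min(v_t,u_t)$. This should give the constant-probability good step directly.
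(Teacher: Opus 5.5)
Your overall scheme is the right one: constant-probability multiplicative progress, binomial domination, Hoeffding, and a union bound over two phases. It matches the paper's treatment of the first phase. However, the estimate that carries all the content is not established: $\mathbb{E}[D_t\mid X_1,\dots,X_t]\ge c\min(v_t,u_t)$ for $D_t=v_{t+1}-v_t$, uniformly in $M$ and $\Delta$. The counting you sketch for it fails exactly where both phases need it.

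For a covered $z'$, the admissible shifts put only $2M+1-|z'|\ge M+1$ targets inside $\{-M,\dots,M\}$, and up to $v_t$ of them may already be covered. So you get only $M+1-v_t$ uncovered targets per covered point, i.e.\ $\mathbb{E}[D_t]\ge v_t(M+1-v_t)/(2M+1)$. The mirror argument from the uncovered side gives $u_t(M+1-u_t)/(2M+1)$. The ratio of these bounds to $\min(v_t,u_t)$ tends to $0$ as $v_t$ approaches $M$; at $v_t=M$ they equal $M/(2M+1)$ and $0$. So neither your first phase (run while $v_t\le M$) nor your second (started once $u_t\le M$) gets the drift it needs near the crossover. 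Without that drift, the Paley--Zygmund step does not yield a constant success probability. The symmetry heuristic is not available either: after $X_1=5$ the covered set is the $\Delta$-thickening of $\{0,5\}$.

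The missing idea is a two-sided count, which is the role of Lemma~\ref{lemma:EVt} in the paper. Write $A_t$ and $B_t$ for the covered and uncovered points of $\{-M,\dots,M\}$. For any window $W=\{a,\dots,a+M\}\subseteq\{-M,\dots,M\}$, every difference of two points of $W$ is a possible value of $X_{t+1}$. Hence $\mathbb{E}[D_t\mid X_1,\dots,X_t]\ge\frac{1}{2M+1}|A_t\cap W|\,|B_t\cap W|$. The windows with $a=-M$ and $a=0$ together contain $v_t+1$ covered and $u_t$ uncovered points, and moving $a$ by one changes each count by at most one. So you can choose $W$ containing exactly $\lceil (v_t+1)/2\rceil$ covered points when $v_t\le M$, or exactly $\lceil u_t/2\rceil$ uncovered points when $u_t\le M$. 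The complementary count in $W$ is then at least $M/2$, which gives $\mathbb{E}[D_t]\ge\min(v_t,u_t)/12$ in every regime. The window must be balanced in this sense. A window merely containing at least half of the covered points, as in the paper's write-up, can contain a single uncovered point when $v_t\approx M$.

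Separately, the bound $D_t\le\min(v_t,u_t)$ that your Paley--Zygmund step relies on is false. Covered values outside $\{-M,\dots,M\}$ can be shifted into range: with $\Delta=0$, $M=10$ and $X_1,\dots,X_6=10,9,8,7,6,-10$, one gets $v_5=6$ but $D_5=8$. Work instead with the truncated increment $\tilde D_t=\#\{z\in B_t: z-X_{t+1}\in A_t\}$, as the paper does with $\tilde v$ in Lemma~\ref{lemma:pvt1}. It satisfies $\tilde D_t\le\min(v_t,u_t)$ and $\tilde D_t\le D_t$, and it obeys the window bound.

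With these repairs your argument goes through. Running the binomial argument in the second phase as well is a valid alternative to the paper's route there. The paper instead iterates $\mathbb{E}[w_{t+1}\mid\cdot]\le\frac78 w_t$ and applies Markov's inequality, so it needs no constant-probability lemma for that phase.
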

Theorem \ref{thm:rss_main} asserts that if $t \geq C \log \frac{M+1}{\Delta+1}$, then the entire set can be $\Delta$-approximated with high probability. To establish this result, we discretize the analysis in \cite{RSSChuna}. Below, we provide a sketch of the proof; full details are deferred to Appendix \ref{app:rssp}.
\begin{proof}[Proof Sketch.]
The recurrence relation (Equation \ref{eq:rss_recurence}) encapsulates the structure of the RSSP and serves as a powerful tool. Using this relation, we first establish that
\begin{equation}
\label{eqt2_1}
    \mathbb{E}(v_{t+1}|X_1,X_2,\dots,X_t)\geq v_t+\frac{v_t}{4}\left(1-\frac{v_t}{2M}\right).
\end{equation}
This is explicitly shown in Lemma \ref{lemma:EVt}, Appendix \ref{app:rssp}. Equation \ref{eqt2_1} plays a central role in the proof, as the argument essentially reduces to analyzing the growth of the stochastic process $v_t$. Note that, for the RSSP to be solvable for any target, it is essential to estimate the number of samples required for $v_t$ to approach $2M+1$, thereby ensuring that the entire set can be well approximated. The analysis in Appendix \ref{app:rssp} shows that $v_t$ grows quickly to $M+1$ and then slowly rises to $2M+1$. We analyze these two phases of growth of $v_t$ separately. For the first phase, define $\tau_1$ as
\begin{equation*}
    \tau_1 = \min \{t\geq 0: v_t>M+1\}.
\end{equation*}
Lemma \ref{lemma:pvt1}, Appendix \ref{app:rssp} characterizes how $v_t$ grows in the first phase by proving that
\begin{equation*}
        \mathbb{P}(v_{t+1}\geq v_t(1+\beta)\;|\;X_1,\dots,X_t,t\leq \tau_1)\geq 1-\frac{7}{8(1-\beta)},
\end{equation*}
for any $\beta \in (0,1/8)$. Using this, we estimate the time required for $v_t$ to grow to $M+1$ with high probability. In particular Lemma \ref{lemma:fir}, Appendix \ref{app:rssp} shows that
    \begin{align*}
        \mathbb{P}(\tau_1\leq t)\geq 1-\exp\left(-\frac{2p_\beta ^2}{t}\left(t-\frac{i^*}{p_\beta}\right)^2\right) && \text{if} && t\geq \frac{1}{1-\frac{1}{8(1+\beta)}}\left \lceil \frac{\log \frac{M+1}{2\Delta+1}}{\log (1+\beta)}\right \rceil.
    \end{align*}
We then similarly analyze the growth of $v_t$ in the second phase, i.e., where the volume grows from $M+1$ to $2M-\Delta+1$. Consider the process $w_t = (2M+1) - v_{\tau_1+t}$. Lemma \ref{lemma:wt}, Appendix \ref{app:rssp} shows that
    \begin{equation*}
        \mathbb{E}(w_{t+1}|X_1,\dots,X_{t+\tau_1}) \leq w_t\left(1 - \frac{1}{4}\left(1-\frac{w_t}{2M+1}\right)\right).
    \end{equation*}
Let $\tau_2$ the first time that $w_t$ gets smaller than or equal to $2M-\Delta+1$, that is
\begin{equation*}
    \tau_2 = \min\{t\geq 0 :w_t \leq 2M-\Delta/2+1\}.
\end{equation*}
Lemma \ref{lemma:sec}, Appendix \ref{app:rssp} shows that
    \begin{equation*}
        \mathbb{P}(\tau_2\leq t) \geq 1 - \frac{1}{2M+1-\Delta/2}\left(\frac{7}{8}\right)^t.
    \end{equation*}
We then tie the two parts to estimate $\mathbb{P}(\tau\leq t)$. In particular we show that
\begin{align*}
    \mathbb{P}(\tau\leq t) 
        &= \mathbb{P}(\tau_1+\tau_2\leq t)\\
    & \ge 1-\exp\left(-\frac{1}{15^2t}\left(t-30\left\lceil \frac{\log \frac{M+1}{2\Delta+1}}{\log \frac{17}{16}}\right\rceil\right)^2\right)- \frac{1}{2M+1-\Delta/2}\left(\frac{7}{8}\right)^{t/2}\\
    &\geq 1-2\exp\left[-\frac{1}{\kappa t}\left(t-C'\log\frac{M+1}{\Delta+1}\right)^2\right],
    \end{align*}
which proves the statement of Theorem \ref{thm:rss_main}. See Appendix \ref{app:rssp} for details.
\end{proof}
\noindent Having established a robust probabilistic guarantee for the discrete Random Subset Sum Problem, we now shift our focus to its application in the context of the Strong Lottery Ticket Hypothesis. The preceding analysis provides the core mathematical tool required to rigorously analyze the existence of strong lottery tickets within a quantized framework. In this next section, we will build upon the foundation of Theorem \ref{thm:rss_main} to demonstrate how an overparameterized neural network with discrete weights can be effectively pruned to approximate a target network, thereby connecting the abstract combinatorial results to tangible implications for quantized neural networks.

\section{Quantization and SLTH}
\label{sec:QuantSLTH}
In this section, we use the results from Section \ref{sec:DiscreteRSS} to prove quantized SLTH results. Our strategy is to follow \cite{pensia, QuantizedSLTH}, while leveraging our new Theorem \ref{thm:rss_main} for quantizing the use of RSSP. We begin by defining some notation. Scalars are denoted by lowercase letters such as $w$, $y$, etc. Vectors are represented by bold lowercase letters, e.g., $\mathbf{v}$, and the $i^{\text{th}}$ component of a vector $\mathbf{v}$ is denoted by $v_i$. Matrices are denoted by bold uppercase letters such as $\mathbf{M}$. If a matrix $\mathbf{W}$ has dimensions $d_1 \times d_2$, we write $\mathbf{W} \in \mathbb{R}^{d_1 \times d_2}$. For a vector $\mathbf{v}$, we use $\|\mathbf{v}\|$ to denote its $\ell_2$ norm. We define the finite set $S_\delta := \{-1, -1 + \delta, -1 + 2\delta, \dots, 1\}$, where $\delta = 2^{-k}$ for some $k \in \mathbb{N}$. A real number $b$ is said to have precision $\delta$ if $b \in S_\delta$. We denote the $d$-fold Cartesian product of $S_\delta$ by $S_\delta^d$; that is,
\begin{equation*}
    S_\delta^d := \underbrace{S_\delta \times \cdots \times S_\delta}_{d\ \text{times}}.
\end{equation*}
We use $C, C'$ etc., to denote positive absolute constants. A notation table (Table \ref{tab:notation}) is provided in Appendix \ref{app:not} for the reader’s convenience.
\begin{definition}
    For any integers $d_0,...,d_\ell>0$ let $\mathcal{F}_{d_0,...,d_\ell}$ be the class of $\ell$-layer neural networks $f : \mathbb{R}^{d_0} \rightarrow \mathbb{R}^{d_\ell}$ defined as
    \begin{equation}
        f(\mathbf{x}) := \mathbf{W}_\ell \sigma(\mathbf{W}_{\ell-1} %\cdots \sigma(\mathbf{W}_1 \mathbf{x})),
        \label{c4_nn}
    \end{equation}
    where $\mathbf{W}_i \in S_{\delta}^{d_i \times d_{i-1}}$ for $i = 1, \dots, \ell$, $\mathbf{x} \in \mathbb{R}^{d_0}$, and $\sigma : \mathbb{R} \rightarrow \mathbb{R}$ is a nonlinear activation function. For a vector $\mathbf{x}$, the expression $\mathbf{v} = \sigma(\mathbf{x})$ denotes component-wise application: $v_i = \sigma(x_i)$.
\end{definition}

For a network as defined in Equation \ref{c4_nn}, we shall denote $d = \max\{d_1,\dots d_{\ell}\}$. The entries of the matrices $\mathbf{W}_i$ are referred to as the weights or parameters of the network. In this work, we assume all activation functions are ReLU, i.e., $\sigma(x) = \max(0, x)$. This assumption is made for simplicity; the results can be extended to a broader class of activation functions as discussed in \cite{Burkholz}. We will also assume that for all weight matrices, the operator norm $|\mathbf{W}|_{\mathrm{op}}\leq1$.

We will say a neural network is $\delta$-quantized if each of its weights is sampled from $S_\delta$.
Our objective is to approximate a \emph{target} $\delta$-quantized $\ell$-layer neural network $f$ by suitably pruning an overparameterized $2\ell$-layer $\delta$-quantized network $g$. 
For a neural network
\begin{equation*}
    g(\mathbf{x}) = \mathbf{M}_{2\ell} \sigma(\mathbf{M}_{2\ell-1} \cdots \sigma(\mathbf{M}_1 \mathbf{x})).
\end{equation*}
The pruned network $\hat{g}$ is defined as:
\begin{equation*}
    \hat{g}(\mathbf{x}) = (\mathbf{S}_{2\ell} \odot \mathbf{M}_{2\ell}) \sigma((\mathbf{S}_{2\ell-1} \odot \mathbf{M}_{2\ell-1}) \cdots \sigma((\mathbf{S}_1 \odot \mathbf{M}_1) \mathbf{x})),
\end{equation*}
where each $\mathbf{S}_i$ is a binary pruning mask with the same dimensions as $\mathbf{M}_i$, and $\odot$ denotes element-wise multiplication. Hence, the goal is to find masks $\mathbf{S}_1, \mathbf{S}_2, \dots, \mathbf{S}_{2\ell}$ such that $f$ can be $\epsilon$ approximated by $\hat{g}$. As will be shown, the problem reduces to solving a collection of RSSPs, which, throughout this section, we assume are to be solved with tolerance $\Delta$ (See Section~\ref{sec:DiscreteRSS}). We now state our main result.
\begin{theorem}
\label{thm:main}
Let $d_0,...,d_\ell$ be any integers greater than 0, $\epsilon$ be any real number in $(0,1)$ and $\delta = \epsilon/2\Delta N_T$ where $N_T= \sum_{i=1}^\ell d_{i-1}d_i$ and $\Delta\in \mathbb{Z}^{+}$.
Consider a randomly initialized $2\ell$-layered neural network
\begin{equation*}
    g(\mathbf{x}) = \mathbf{M}_{2\ell}\sigma(\mathbf{M}_{2\ell-1}\dots\sigma(\mathbf{M}_1\mathbf{x})),
\end{equation*}
     where every weight is drawn uniformly from $S_\delta$. Let $n > \log \frac{\frac{1}{\delta}+1}{\Delta+1}$. Assume $\mathbf{M}_{2i}$ has dimension
\begin{equation*}
    d_i\times C d_{i-1}n,
\end{equation*}
and $\mathbf{M}_{2i-1}$ has dimension
\begin{equation*}
    C d_{i-1}n\times d_{i-1}.
\end{equation*}
Then, with probability at least
\begin{equation*}
    1-4d^2\ell\exp(-C'n),
\end{equation*}
for every $f\in \mathcal{F}_{d_0,...,d_\ell}$ it holds
\begin{equation*}
    \underset{\mathbf{S}_i\in\{0,1\}^{d_i\times d_{i-1}}}{\min}\underset{\|x\|\le 1}{\sup}\|f(\mathbf{x})-(\mathbf{S}_{2\ell}\odot\mathbf{M}_{2\ell}) \sigma((\mathbf{S}_{2\ell}\odot\mathbf{M}_{2\ell})\dots \sigma((\mathbf{S}_{1}\odot\mathbf{M}_{1})\mathbf{x}))\|\le \varepsilon.
\end{equation*}
\end{theorem}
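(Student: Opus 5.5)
We follow the construction of \cite{pensia} (as adapted in \cite{QuantizedSLTH}), replacing the continuous RSSP bound by Theorem~\ref{thm:rss_main}. The argument has three parts. First, a single weight is approximated, using a product of two quantized weights. Second, this is lifted to a single layer $\mathbf{W}_i$, represented by the two-layer block $\mathbf{M}_{2i}\sigma(\mathbf{M}_{2i-1}\cdot)$. Third, the layers are composed, and the rounding and approximation errors are propagated through the network using $\|\mathbf{W}\|_{\mathrm{op}}\le 1$.

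\emph{Scalar step.} Fix a target weight $w\in S_\delta$ and an input coordinate $x$. We use the identity $wx=w\sigma(x)-w\sigma(-x)$. In the hidden layer we keep $Cn$ neurons $j$ that connect only to $x$. We split them into those with positive first-layer weight $a_j$, which yield $\sigma(a_jx)=a_j\sigma(x)$, and those with negative $a_j$, which yield $|a_j|\sigma(-x)$. The output weights $b_j$ are also uniform on $S_\delta$. Multiplying by $1/\delta$, the products $a_jb_j$ become integer-valued random variables on a scale $M\asymp\delta^{-2}$, but they are not uniform. I would handle this in the way \cite{pensia} handle products of uniforms. The plan is to condition on a constant fraction of indices where $|a_j|$ lies in a fixed window, say $[1/2,1]$. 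On these indices, $a_j b_j/\delta$ contains a uniform component on a discrete grid of size $\asymp 1/\delta$, so a version of Lemma~\ref{lemma:EVt} still gives the drift inequality \eqref{eqt2_1} with a smaller constant.

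Alternatively, and more simply, one can prune the $a_j$ down to those equal to $\pm1$, or to a fixed value. This requires $n\gtrsim 1/\delta$, which is too many neurons, so I prefer the window argument, which needs only that the recurrence \eqref{eq:rss_recurence} and the martingale-type analysis survive with a constant-fraction mixture. Either way, Theorem~\ref{thm:rss_main} with $M=1/\delta$ and tolerance $\Delta$ gives the following. With probability $1-2\exp(-C'n)$, once $n>C\log\frac{1/\delta+1}{\Delta+1}$ (with the constants absorbed), every $w\in S_\delta$ admits a subset $S$ satisfying $|\sum_{j\in S}a_jb_j-w|\le\Delta\delta$. This bound is uniform over all targets simultaneously, which is exactly what the theorem's ``for every $f$'' requires.

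\emph{Layer step.} As in \cite{pensia}, $\mathbf{M}_{2i-1}$ is pruned to be block-diagonal, with $Cn$ neurons per input coordinate and separate blocks for the positive and negative parts. Each output entry $(\mathbf{W}_i)_{kl}$ is then realized by choosing a subset of the output weights of block $l$ that feed neuron $k$. Each of the $d_id_{i-1}$ entries requires its own RSSP instance: the block's first-layer weights are shared across entries, but the second-layer weights are fresh for each entry. A union bound over at most $2d^2$ instances per layer and $\ell$ layers gives the failure probability $4d^2\ell\exp(-C'n)$.

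\emph{Error propagation.} Each pruned block computes $\widehat{\mathbf{W}}_i\mathbf{x}$ with entrywise error at most $\Delta\delta$, so $\|\widehat{\mathbf{W}}_i-\mathbf{W}_i\|_{\mathrm{op}}\le\|\cdot\|_F\le\Delta\delta\sqrt{d_id_{i-1}}\le \Delta\delta\,d_id_{i-1}$. By the standard telescoping argument, using that ReLU is $1$-Lipschitz and that $\|\mathbf{W}_i\|_{\mathrm{op}}\le1$ (with $\|\widehat{\mathbf{W}}_i\|_{\mathrm{op}}\le 1+\Delta\delta d_id_{i-1}$), the total error for $\|\mathbf{x}\|\le1$ is at most
\[
\sum_i \Delta\delta d_id_{i-1}\prod_{j}(1+\Delta\delta d_jd_{j-1})\le \Delta\delta N_T e^{\Delta\delta N_T}\le 2\Delta\delta N_T=\epsilon .
\]
Here the middle inequality uses $\Delta\delta N_T=\epsilon/2<1$, and the final equality holds because $\delta=\epsilon/(2\Delta N_T)$.

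The main obstacle is the scalar step. Theorem~\ref{thm:rss_main} is stated for summands that are uniform on $\{-M,\dots,M\}$, while the effective summands $a_jb_j$ are products of two quantized uniforms. The proof of Theorem~\ref{thm:rss_main} therefore has to be rerun: the drift bound of Lemma~\ref{lemma:EVt} and the first-phase lemma need to be re-established for a distribution that dominates a constant multiple of the uniform one on the relevant integer range. I would do this by a mixture decomposition, and I expect it to change only the constants $C,C',\kappa$.
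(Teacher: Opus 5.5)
Your proposal is sound and has the same architecture as the paper's proof: a single weight as in Lemma~\ref{lem:singlelink}, a layer via the shared block-diagonal first layer of Lemma~\ref{lem:layer}, then composition and a union bound over at most $2d^2\ell$ RSSP instances. It differs from the paper at the two delicate points.

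\emph{Non-uniform summands.} The paper prunes to $\mathbf{a}^+$ and invokes the rejection-sampling Theorem~\ref{thm:rej} to assert that a constant fraction of the $Z_i=b_ia_i^+$ are uniform. It then applies Theorem~\ref{thm:rss_main} as a black box. You instead condition on first-layer weights with $|a_j|\in[1/2,1]$ and re-run the drift analysis. The paper's route is shorter, but Theorem~\ref{thm:rej} only yields samples uniform on the \emph{support} of $Z_i$, which is not an interval of the $\delta^2$-lattice, so Theorem~\ref{thm:rss_main} does not apply verbatim. Your route costs a re-proof of Lemmas~\ref{lemma:EVt} and~\ref{lemma:fir}, but it can be made rigorous.

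Your closing description of that lemma is not right as stated, though. You ask for a law that ``dominates a constant multiple of the uniform one on the relevant integer range.'' On the lattice $\delta^2\mathbb{Z}$ (the integer scale is $1/\delta^2$, not $1/\delta$ as you wrote), $a_jb_j$ puts zero mass on points $p\delta^2$ with $p>1/\delta$ prime, so no pointwise mixture decomposition exists. What works is your window observation made precise:
\begin{itemize}
\item given $a_j$, the product $a_jb_j$ is uniform on an arithmetic progression of step $|a_j|\delta\le\delta$ spanning $[-|a_j|,|a_j|]\supseteq[-1/2,1/2]$;
\item the covered set at every time is a union of intervals of length at least $2\Delta\delta\ge 2|a_j|\delta$;
\item so each such interval within range contains a number of progression points proportional to its length.
\end{itemize}
This gives \eqref{eqt2_1} up to constants. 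It is exactly where $\Delta\ge 1$ is used; for $\Delta=0$ this argument breaks.

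\emph{Error propagation.} You charge $\Delta\delta$ per entry, which is correct since for each coordinate only one of $\sigma(x_l),\sigma(-x_l)$ is nonzero. You then telescope with operator norms and get $\Delta\delta N_Te^{\Delta\delta N_T}\le\epsilon$ directly from $\delta=\epsilon/(2\Delta N_T)$. Strictly, the block computes $\widehat{\mathbf W}^+_i\sigma(\mathbf x)-\widehat{\mathbf W}^-_i\sigma(-\mathbf x)$, i.e.\ $\widehat{\mathbf W}_i(\mathbf x)\mathbf x$ with a matrix that depends on the sign pattern of $\mathbf x$, not a fixed linear map. Every such matrix is entrywise within $\Delta\delta$ of $\mathbf W_i$, so your operator-norm bounds hold uniformly and the telescoping is unaffected.

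The paper instead charges $2\Delta\delta$ per weight and allots $\epsilon/(2\ell)$ per layer. That allotment does not literally follow from its choice of $\delta$: Lemma~\ref{lem:layer} then gives per-layer error $\epsilon d_{i-1}d_i/N_T$, which averages $\epsilon/\ell$. Your bookkeeping is the one that matches the statement.
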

For convenience, we define $\log \frac{\frac{1}{\delta}+1}{\Delta+1}$ as the \emph{overparameterization factor}.
\begin{remark}
The assumptions made in our analysis can be further relaxed. In particular, the requirement of uniformly distributed weights can be removed via a standard rejection sampling argument \cite{Lueker1998}. Moreover, the restriction to ReLU activations can also be relaxed, as shown in \cite{Burkholz}.
\end{remark}
\begin{remark}
The result by \cite{Burkholz} using $\ell+1$ layers in the bigger network can also be quantized using our method.
\end{remark}
We follow the strategy introduced in \cite{pensia} to prove Theorem \ref{thm:main}. We first show the result for a single weight.
\begin{lemma} [Approximating a weight]
\label{lem:singlelink}
    Let $\epsilon$ be any real number in $(0,1)$ and $\delta = \epsilon/2\Delta$, where $\Delta\in \mathbb{Z}^+$. Let $g : \mathbb{R} \to \mathbb{R}$ be a randomly initialized network of the form $g(x) = \mathbf{v}^T\sigma(\mathbf{u}x)$, where $\mathbf{v},\mathbf{u}\in S_\delta^{2n}$ with $ n\geq C\log \frac{\frac{1}{\delta}+1}{\Delta+1}$, and all $v_i, u_i$'s are drawn i.i.d. uniformly from $S_\delta$. Then with probability at least
    \begin{equation*}
        1-4\exp\left(-\frac{1}{\kappa n}\left(n-C'\log\frac{\frac{1}{\delta}+1}{\Delta+1}\right)^2\right),
    \end{equation*}
    we have for any $w\in S_\delta$
    \begin{equation*}
        \exists \; \mathbf{s},\mathbf{s}'\in \{0,1\}^{2n}: \underset{x:|x|\leq 1}{\sup}|wx-(\mathbf{v}\odot \mathbf{s})^T\sigma((\mathbf{u}\odot \mathbf{s}')(x))|\leq \epsilon.
    \end{equation*}
\end{lemma}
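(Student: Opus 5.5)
We follow the construction of \cite{pensia}, and split the $2n$ hidden neurons into two halves of size $n$: the first half handles the positive part of the input and the second half the negative part. We use the identity $wx = w\sigma(x) - w\sigma(-x)$. Restrict the first half to indices $i$ with $u_i>0$, and the second half to indices with $u_i<0$. This is done by choosing $\mathbf{s}'$ to zero out all other $u_i$ (and the corresponding $v_i$ via $\mathbf{s}$). For $i$ in the first half with $u_i>0$ we have $\sigma(u_i x)=u_i\sigma(x)$. Hence the first half of the pruned network computes $\sigma(x)\sum_{i\in S_1} v_iu_i$, and symmetrically the second half computes $\sigma(-x)\sum_{i\in S_2} v_i|u_i|$. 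It therefore suffices to find subsets with
\[
\Bigl|\sum_{i\in S_1} v_iu_i - w\Bigr|\le \epsilon/2 \quad\text{and}\quad \Bigl|\sum_{i\in S_2} v_i|u_i| + w\Bigr|\le \epsilon/2 .
\]
These bounds give $|wx-\hat g(x)|\le \epsilon/2\,(\sigma(x)+\sigma(-x))\le\epsilon/2$ for $|x|\le1$.

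Each condition is an RSSP instance, and we reduce it to the discrete setting of Theorem~\ref{thm:rss_main}. Here lies the main obstacle: the products $v_iu_i$ are neither uniform nor on the grid $S_\delta$ (they lie on the finer grid $\delta^2\mathbb{Z}$). We first try the simplification of \cite{pensia,QuantizedSLTH}. Among the $n$ indices of a half, with probability $1-e^{-cn}$ (Chernoff) a constant fraction have $u_i$ at least a constant, say $u_i\ge 1/2$ (or $\le -1/2$). Conditioning on the $u_i$'s, we treat $v_i$ as the random variable. It is uniform on $S_\delta$, i.e.\ $v_i=\delta X_i$ with $X_i$ uniform on $\{-M,\dots,M\}$, $M=1/\delta$. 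To avoid the non-uniform product we can instead select indices where $u_i$ takes a single fixed value $u^*\in S_\delta$ bounded away from $0$. If this leaves too few indices, we argue instead via a rejection-sampling/domination argument (as in the Remark after Theorem~\ref{thm:main}): the density of $v_iu_i$, rescaled, dominates a constant multiple of a uniform law on a discrete interval, and the recurrence \eqref{eq:rss_recurence} only needs $\mathbb{E}f_t(z-X)$ to be lower bounded by a constant times the uniform average. The proof of Lemma~\ref{lemma:EVt} goes through with a worse constant, which is absorbed into $C,\kappa$. Then Theorem~\ref{thm:rss_main}, applied with tolerance $\Delta$ in integer units, gives, with the stated probability, that every target in $\{-M,\dots,M\}$ (in particular every $w\in S_\delta$ rescaled, possibly after scaling by $u^*$) is approximated within $\Delta$ grid steps. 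That is, the error is within $\Delta\delta=\epsilon/2$, exactly matching $\delta=\epsilon/2\Delta$. This holds simultaneously for all $w$, since $\tau\le t$ is a uniform statement over targets.

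Finally, we use a union bound over the two halves, each failing with probability at most $2\exp\bigl(-\frac{1}{\kappa n}(n-C'\log\frac{1/\delta+1}{\Delta+1})^2\bigr)$, with the Chernoff failure probability absorbed by adjusting constants (it is of the same or smaller order when $n\ge C\log\frac{1/\delta+1}{\Delta+1}$). This gives the claimed probability $1-4\exp(\cdot)$. The step I expect to need the most care is justifying that Theorem~\ref{thm:rss_main} applies to the effective summands $v_iu_i$ (or $v_i$ on a fixed-$u$ subpopulation) with only constant-factor losses in $n$.
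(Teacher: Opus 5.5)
Your reduction is the paper's own: prune $\mathbf{u}$ by sign, turn each half into a subset-sum problem with summands $v_iu_i$, and union-bound the two halves to get the factor $4$. The gap is at the step you flag, and neither device you offer closes it. Restricting to a single value $u^*$ is vacuous: $\mathbb{P}(u_i=u^*)=\delta/(2+\delta)$, so among $n=\Theta\bigl(\log\frac{1/\delta+1}{\Delta+1}\bigr)$ indices you expect about $n\delta/2\ll 1$ of them. The domination claim is false on the grid where the products live. With $M=1/\delta$, in units of $\delta^2$ the summand is $VU$, with $V$ uniform on $\{-M,\dots,M\}$ and $U$ uniform on $\{1,\dots,M\}$. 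This law has zero mass at every prime in $(M,M^2]$ (indeed at most integers there), so its pmf dominates no constant multiple of a uniform law on any interval of length $\Omega(M^2)$. You also cannot work on the $\delta$-grid of the targets instead, since the products do not lie on it. You are right that the recurrence only needs $\mathbb{E}f_t(z-X)$ to be at least a constant times a uniform window average. But if $f_t$ is an arbitrary indicator, this is equivalent to pointwise domination, so some structure of $f_t$ must be used.

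That structure comes from $\Delta\ge 1$, which your argument never uses. In units of $\delta^2$ the tolerance is $\Delta M\ge M$, so $\{z:f_t(z)=1\}$ is a union of runs of length at least $2M+1$. It then suffices to have $\mathbb{P}(VU\in I)\ge c\,|I|/M^2$ for intervals $I\subseteq[-M^2/2,M^2/2]$ with $|I|\ge M$. This holds because each of the at least $M/2$ values $U\ge M/2$ admits at least $|I|/(2M)$ values of $V$. Equivalently, use your conditioning idea without insisting on one value $u^*$: given $\mathbf{u}$, keep the indices with $u_i\ge 1/2$ (a constant fraction, by Chernoff). Each $v_iu_i$ is then uniform on a grid of step $u_i\delta\le\Delta\delta$ covering $[-1/2,1/2]$, i.e.\ uniform at the resolution of the tolerance. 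Either way, Lemma~\ref{lemma:EVt} and the two-phase argument must be redone for this averaged uniform component. Since that component only covers $[-1/2,1/2]$, targets with $|w|>1/2$ need a short extra step. For comparison, the paper's appeal to Theorem~\ref{thm:rej} is no more precise here. The uniform component it extracts from $Z_i=b_ia_i^+$ is uniform on the support of $Z_i$, which is not an interval of either grid, so Theorem~\ref{thm:rss_main} does not apply to it as stated.
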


\begin{proof}
We decompose $w x$ as $w x=\sigma(w x)-\sigma(-w x)$ and w.l.o.g.\footnote{Without loss of generality.} assume $w\ge0$ (the reasoning for $w<0$ is analogous).  Let
\begin{equation*}
 \mathbf{v}=\begin{bmatrix}\mathbf{b}\\\mathbf{d}\end{bmatrix},\quad \mathbf{u}=\begin{bmatrix}\mathbf{a}\\\mathbf{c}\end{bmatrix},\quad \mathbf{s}=\begin{bmatrix}\mathbf{s_1}\\\mathbf{s_2}\end{bmatrix}, \quad
 \mathbf{s}'=\begin{bmatrix}\mathbf{s_1}'\\\mathbf{s_2}'\end{bmatrix},
\end{equation*}
where $\mathbf{a},\mathbf{b},\mathbf{c},\mathbf{d}\in S_\delta^n$, $\mathbf{s_1},\mathbf{s_2}\in\{0,1\}^n$. 
One can thus verify that
\begin{equation*}
 (\mathbf{v}\odot \mathbf{s})^T\sigma((\mathbf{u}\odot\mathbf{s}') x)=(\mathbf{b}\odot \mathbf{s_1})^T\sigma(\mathbf{a}\odot\mathbf{s_1}' x)+(\mathbf{d}\odot \mathbf{s_2})^T\sigma(\mathbf{c}\odot\mathbf{s_2}' x).
\end{equation*}
\vspace{0.2cm}
\noindent \textbf{Step 1: Pre-processing $\mathbf{a}$.} Let
\begin{equation*}
 \mathbf{a}^+=\max\{0,\mathbf{a}\}
\end{equation*}
be the vector obtained by pruning all negative entries of $\mathbf{a}$.  
Then $\mathbf{a}^+$ contains $n$ i.i.d.\ random variables with distribution $\mathbb{P}(( \mathbf{a}^+)_i = 0 )= \frac{M+1}{2M+1}$ and $\mathbb{P}((\mathbf{a}^+)_i= j) = \frac 1{2M+1}$ for each $i\in \{1,...,n\}$ and $j\in \{1,...,M\}$. 
Since we assumed $w\ge0$, for $x\le0$ we have $\sigma(w x)=0$ and $\mathbf{b}^T\sigma(\mathbf{a}^+x)=0$.  We thus focus on $x>0$:
\begin{equation*}
 \sigma(w x)=w x,\quad \mathbf{b}^T\sigma(\mathbf{a}^+x)=\sum_{i}b_i a^+_i x.
\end{equation*}
We simply choose $\mathbf{s_1}'$ such that $\mathbf{a}^{+} = \mathbf{a}\odot\mathbf{s_1}'$.

\vspace{0.2cm}
\noindent \textbf{Step 2: Pruning $\mathbf{a}$ via SUBSETSUM.} Consider the random variables $Z_i=b_i a^+_i$. Note that solving the Subset Sum Problem in an integer setting where numbers are sampled from $\{-M,\dots,M\}$ and solving it when numbers are sampled from $\{-1,\dots\delta,2\delta,\dots,1\}$ is equivalent: the  only difference is a scaling factor. Note that the numbers $\{Z_i\}_i$ are of precision $\delta^2$, and they are not uniformly distributed (See \cite{QuantizedSLTH}). However, for $n$ large enough, a constant fraction of these samples is uniformly distributed (Theorem \ref{thm:rej}, Appendix \ref{app:rej}). Hence by Theorem \ref{thm:rss_main}, as long as
\begin{equation*}
 n\geq C\log \frac{\frac{1}{\delta}+1}{\Delta+1},
\end{equation*}
with probability at least 
\begin{equation*}
        1-2\exp\left(-\frac{1}{\kappa n}\left(n-C\log\frac{\frac{1}{\delta}+1}{\Delta+1}\right)^2\right),
\end{equation*}
we can choose a subset of $\{Z_i\}$ such that
\begin{equation*}
 \forall w\in S_\delta,\ \bigl|w - \mathbf{b}^T(\mathbf{s}_1\odot\mathbf{a}^+)\bigr|< \Delta\delta \,.
\end{equation*}
Hence, it holds
\begin{equation*}
 \forall w\in S_\delta,\ \sup_{x\in[-1,1]}\bigl|\sigma(w x)-\mathbf{b}^T\sigma((\mathbf{s}_1\odot\mathbf{a}^+)x)\bigr|<\Delta\delta.  \label{eq:subsum-a}
\end{equation*}

\vspace{0.2cm}
\noindent \textbf{Step 3: Pre-processing $\mathbf{c}$.} Let
\begin{equation*}
 \mathbf{c}^- =\min\{0,\mathbf{c}\}
\end{equation*}
be the vector obtained by pruning all positive entries of $\mathbf{c}$.  Then $\mathbf{c}^-$ contains $n$ i.i.d. random variables uniformly distributed over non positive entries of $S_\delta$. We simply choose $\mathbf{s_2}'$ such that $\mathbf{c}^{-} = \mathbf{c}\odot\mathbf{s_2}'$.

\vspace{0.2cm}
\noindent \textbf{Step 4: Pruning $\mathbf{c}$ via SUBSETSUM.} For $x\ge0$, $\sigma(-w x)=0$ and $\mathbf{d}^T\sigma(\mathbf{c}^-x)=0$.
Moreover, pruning $\mathbf{c}^{-}$ further does not affect the equality. Thus, we only consider the case $x < 0$.

\noindent For $x<0$, one has $-\sigma(-w x)=w x$ and $\sigma(\mathbf{c}^-x)=\mathbf{c}^-x$, so
\begin{equation*}
 \mathbf{d}^T\sigma(\mathbf{c}^-x)=(\mathbf{d}^T\mathbf{c}^-)x.
\end{equation*}
Applying Theorem \ref{thm:rss_main} again, with $n\geq C\log \frac{\frac{1}{\delta}+1}{\Delta+1}$, with probability at least
\begin{equation*}
        1-2\exp\left(-\frac{1}{\kappa n}\left(n-C\log\frac{\frac{1}{\delta}+1}{\Delta+1}\right)^2\right),
\end{equation*}
there exists $\mathbf{s}_2\in\{0,1\}^n$ such that
\begin{equation*}
 \forall w\in S_\delta,\ \sup_{x\in[-1,1]}\bigl|-\sigma(-w x)-\mathbf{d}^T\sigma((\mathbf{s}_2\odot\mathbf{c}^-)x)\bigr|< \Delta\delta.  \label{eq:subsum-c}
\end{equation*}

\noindent \textbf{Step 5: Tying it all together.} Recall that we assumed, w.l.o.g., $w\ge0$.  By the above reasoning and a union bound, both events hold with probability at least
\begin{equation*}
        1-4\exp\left(-\frac{1}{\kappa n}\left(n-C'\log\frac{\frac{1}{\delta}+1}{\Delta+1}\right)^2\right).
\end{equation*}
We thus get that
\begin{align*}
    &\inf_{\mathbf{s}\in\{0,1\}^{2n}}\sup_{|x|\le1}\bigl|w x - (\mathbf{v}\odot\mathbf{s})^T\sigma((\mathbf{u}\odot\mathbf{s}')x)\bigr|\\
    &= \inf_{\mathbf{s}_1,\mathbf{s}_2}\sup_{|x|\le1}\bigl|w x - (\mathbf{b}\odot\mathbf{s_1})^T\sigma(\mathbf{a^+}x)-(\mathbf{d}\odot\mathbf{s_2})^T\sigma(\mathbf{c^-}x)\bigr|\\
    &=\inf_{\mathbf{s}_1,\mathbf{s}_2}\sup_{|x|\le1}\bigl|\sigma(w x)-\sigma(-w x) - (\mathbf{b}\odot\mathbf{s_1})^T\sigma(\mathbf{a^+}x)-(\mathbf{d}\odot\mathbf{s_2})^T\sigma(\mathbf{c^-}x)\bigr|\\
    &\leq \inf_{\mathbf{s}_1}\sup_{|x|\le1}\bigl|\sigma(w x)-(\mathbf{b}\odot\mathbf{s_1})^T\sigma(\mathbf{a}^+x)\bigr| + \inf_{\mathbf{s}_2}\sup_{|x|\le1}\bigl|-\sigma(-w x)-(\mathbf{d}\odot\mathbf{s_2})^T\sigma(\mathbf{c}^-x)\bigr|\\
    &\leq 2\Delta\delta.
\end{align*}
Since $\delta = \varepsilon/2\Delta$, the result follows.
\end{proof}

\noindent Having proved Lemma \ref{lem:singlelink}, we now give a sketch of the proof of Theorem \ref{thm:main}. For the detailed proof, see Appendix \ref{app:slth}.
\vspace{8pt}
\begin{proof}[Sketch of proof of Theorem \ref{thm:main}.]
     The idea is to follow the argument of \cite{pensia}, which extends the approximation from a single weight to the entire network. The extension from a single weight to an entire network proceeds in successive stages. Lemma \ref{lem:singlelink} shows that a single quantized weight can be simulated with high probability using a sufficiently wide two-layer construction, with approximation error at most $\epsilon$. First, this guarantee is then lifted to the level of a neuron in Lemma \ref{lem:single_neuron}, Appendix \ref{app:slth}: since a neuron applies several weights to its inputs, each weight is handled separately via Lemma \ref{lem:singlelink}, and a union bound ensures that all incoming weights are simultaneously approximated with exponentially high probability, so that the entire neuron behaves as desired up to error $\epsilon$. Next, Lemma \ref{lem:layer}, Appendix \ref{app:slth} extends this argument to a full layer. The weight sharing trick introduced by \cite{pensia} is used (See Appendix \ref{app:slth} for details) and another union bound guarantees that the layer as a whole is approximated uniformly over the input space, with only an additive accumulation of error. Finally, the approximation is propagated across all $\ell$ layers of the network. Starting from the first layer, one iteratively applies the layer-level result to approximate each subsequent layer, carefully tracking the errors through the nonlinearities. The error propagation analysis shows that the total deviation remains within the prescribed tolerance $\epsilon$, while the failure probability across all layers is still exponentially small due to the bounds established at the weight, neuron, and layer levels. This yields Theorem \ref{thm:main}, which asserts that with high probability, a randomly initialized $2\ell$-layer 
quantized network can be pruned to $\epsilon$-approximate any target $\ell$-layer quantized network. Further details are provided in Appendix \ref{app:slth}.
\end{proof}

\subsection{Previous Results as Special Cases}

A striking feature of our results is that both approximate representations in the continuous setting \citep{slthproof, pensia, orseauLogarithmicPruningAll2020}, and exact representations in a
quantized setting \citep{multiprize,QuantizedSLTH}, can be obtained as special cases in the limit of our results. To see this, consider the over overparameterization factor
\begin{equation}
    \label{eq:of}
    \log \frac{\frac{1}{\delta}+1}{\Delta+1}.
\end{equation}
Notice that an error $\epsilon$ in the continuous setting corresponds to an error $\Delta \cdot \delta$ in the quantized setting. Substituting this into Equation~\ref{eq:of}, we obtain
\begin{align*}
    \log \frac{\frac{1}{\delta}+1}{\Delta+1} = \log \frac{\frac{1}{\delta}+1}{\frac{\epsilon}{\delta}+1}   = \log \frac{1+\delta}{\epsilon+\delta}.
\end{align*}
In the limit $\delta \to 0$, which corresponds to the continuous setting, the overparameterization factor reduces to $\log \frac{1}{\epsilon}$, matching the expression in \cite{pensia, orseauLogarithmicPruningAll2020}. Conversely, setting $\Delta = 0$, corresponding to exact representation in the quantized setting, yields $\log \left(\frac{1}{\delta} + 1\right)$, which aligns with the overparameterization in \cite{QuantizedSLTH}.

\subsection{Quantized Neuron Activations}
Our results apply to standard neural networks whose weights are sampled from a discrete set. However, the activations in these networks are assumed to have arbitrary precision, which does not fully reflect the behavior of real-world neural networks, since computers always operate with finite precision. In contrast, \cite{QuantizedSLTH} established their results under mixed-precision assumptions, where neuron activations are also discretized. Interestingly, our results remain valid under these assumptions as well. In particular, consider a mixed-precision network as in \cite{QuantizedSLTH}, where activations in even-numbered layers have precision $\delta$ and those in odd-numbered layers have precision $\delta^2$. Under this setup, Lemmas \ref{lem:singlelink}, \ref{lem:single_neuron}, and \ref{lem:layer} remain valid. Using Lemma \ref{lem:layer}, the error propagation can then be done while accounting for this mixed-precision assumption, leading to a version of Theorem \ref{thm:main} adapted to the mixed-precision setting.

\section{Experiments}
\label{sec:expt}
We empirically validate our theoretical results by solving random subset sum instances with $\{X_i\}_{i=1}^n \sim \{-M,\dots,M\}$ in the exact setting ($\Delta=0$). The observed behavior matches the prediction that success occurs when $n \gtrsim C \log(M)$, with $C \approx 1.5$. Moreover, solving the collection of subset sum problems required to recover a ResNet-50 \citep{resnet}–scale subnetwork ($2.5\times10^7$ parameters, FP8) takes $\sim 33$ minutes on a CPU without parallelization, indicating strong practical efficiency. Code is available in the Supplementary material. See Appendix~\ref{app:expt} for details.

\section{Conclusion}
\label{sec:conc}
In this work, we established sharp bounds for the discrete Random Subset Sum Problem (RSSP) and applied them to the quantized Strong Lottery Ticket Hypothesis (SLTH). Our main contribution is to show that both approximate representations and exact representations in quantized settings naturally arise as limiting cases of our analysis. This unifying perspective brings together previously separate lines of work under a single framework.
In addition, our results demonstrate that the failure probability of finding a sparse subnetwork in a randomly initialized, quantized neural network decays exponentially with overparameterization—a substantial improvement over the inverse-polynomial guarantees in prior work. By simultaneously addressing approximation and rounding errors, our framework broadens the scope of SLTH theory to more realistic finite-precision regimes, while recovering and strengthening earlier results.
Overall, this work highlights the deep interplay between combinatorial problems such as RSSP and foundational questions in deep learning, and marks an important step toward a cohesive theory of strong lottery tickets across both continuous and quantized settings.

\newpage

\bibliographystyle{plainnat}
\bibliography{References}

\appendix

    \section{RSSP Results}
    \label{app:rssp}
    Let's start by recalling the setup from Section~\ref{sec:DiscreteRSS}. Consider the set of random variables $X_1,X_2,\dots,X_n$ sampled uniformly from $\{-M,-M+1,\dots,M-1,M\}$ for some positive integer $M$ and target $z\in\{-M,-M+1,\dots,M-1,M\}$. Let $\Delta\in\mathbb{N}\cup\{0\}$. The Random Subset Sum Problem (RSSP) is the problem of finding a subset $S\subseteq [n]$ such that
\begin{equation*}
    \big|\sum_{i\in S} X_i - z\big|\leq \Delta.
\end{equation*}
Let $f_t:\mathbb{Z}\to\{0,1\}$ be the indicator for the event $z$ is $\Delta$ approximated at time $t\in\mathbb{N}$, i.e.,
\begin{equation*}
    f_t(z) = \mathbb{I}_{\exists S\subseteq [t]\;:\;|\sum_{i\in S}X_i-z|\leq \Delta}.
\end{equation*}
It is clear that $f_t$ follows the following recurrence relation
\begin{equation*}
    f_{t+1}(z)=f_{t}(z)+(1-f_t(z))f_t(z-X_{t+1}).
\end{equation*}
Define $v_t$ as
\begin{equation*}
    v_t=\sum_{i=-M}^M f_t(i),
\end{equation*}
with $f_t(0)=1$ by definition.

\begin{lemma}
\label{lemma:EVt}
For all $0\leq t \leq n$, it holds that
\begin{equation*}
    \mathbb{E}(v_{t+1}|X_1,X_2,\dots,X_t)\geq v_t+\frac{v_t}{4}\left(1-\frac{v_t}{2M}\right).
\end{equation*}
\end{lemma}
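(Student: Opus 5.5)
The plan is to compute the conditional expectation exactly from the recurrence \eqref{eq:rss_recurence} and then reduce the claim to a purely combinatorial counting inequality about the set $A:=\{z\in\{-M,\dots,M\}: f_t(z)=1\}$, which has $|A|=v_t=:v$ and $0\in A$. Summing \eqref{eq:rss_recurence} over $z\in\{-M,\dots,M\}$ gives
\begin{equation*}
v_{t+1}-v_t=\sum_{z=-M}^{M}(1-f_t(z))\,f_t(z-X_{t+1}).
\end{equation*}
Since $X_{t+1}$ is independent of $X_1,\dots,X_t$ and uniform on $\{-M,\dots,M\}$, I obtain
\begin{equation*}
\mathbb{E}(v_{t+1}-v_t\mid X_1,\dots,X_t)=\frac{1}{2M+1}\,\#\bigl\{(a,z): a\in A,\ z\in A^c,\ |z-a|\le M\bigr\},
\end{equation*}
where $A^c$ is the complement of $A$ inside $\{-M,\dots,M\}$. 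Here I use that $f_t(z-x)=1$ whenever $z-x\in A$. The inequality $\ge$ suffices in any case, because $f_t$ may also equal $1$ at points outside $[-M,M]$.

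It then suffices to show that the number $P$ of such ``good pairs'' satisfies $P\ge v(2M+1-v)/4$. Indeed,
\begin{equation*}
\frac{v(2M+1-v)}{4(2M+1)}=\frac v4\Bigl(1-\frac{v}{2M+1}\Bigr)\ge\frac v4\Bigl(1-\frac v{2M}\Bigr),
\end{equation*}
which is exactly the claimed bound.

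To lower bound $P$, I would split according to signs. Let $A^\pm$ be the nonnegative and negative parts of $A$, and let $B^\pm$ be the nonnegative and negative parts of $A^c$. The key geometric fact is that any $a\ge0$ and $z\ge0$ in $[-M,M]$ satisfy $|z-a|\le M$, and likewise when both are negative. Hence
\begin{equation*}
P\ge |A^+||B^+|+|A^-||B^-|.
\end{equation*}
Next I handle cross pairs. For $a\in A^+$ and $z\in B^-$, the constraint is $a-z\le M$. Among the cross pairs, at least those with $a\le M/2$ and $z\ge -M/2$ qualify, and symmetrically for $a\in A^-$, $z\in B^+$. So I would further subdivide $[-M,M]$ into the four quarters $Q_1=[-M,-M/2)$, $Q_2=[-M/2,0)$, $Q_3=[0,M/2]$ and $Q_4=(M/2,M]$. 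Any two points lying in the same quarter or in adjacent quarters are at distance at most $M$. Writing $\alpha_j=|A\cap Q_j|$ and $\beta_j=|A^c\cap Q_j|$, this gives
\begin{equation*}
P\ge\sum_{|j-k|\le1}\alpha_j\beta_k .
\end{equation*}
What remains is to show this quadratic form is at least $\tfrac14\bigl(\sum_j\alpha_j\bigr)\bigl(\sum_k\beta_k\bigr)$, using also $\alpha_j+\beta_j=|Q_j|$ and $0\in A$.

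The main obstacle is this final inequality. Pairs between far quarters, such as $(Q_1,Q_3)$, $(Q_1,Q_4)$ and $(Q_2,Q_4)$, are not counted, and an adversarial $A$ could concentrate in $Q_1$ while $A^c$ concentrates in $Q_4$. The first thing I would try is to exploit that the middle quarters are adjacent to each other and each is adjacent to one end quarter. If $A$ is concentrated in $Q_1$, then the complement mass in $Q_2$ pairs with all of it. If instead $Q_2$ is mostly in $A$, those points pair with the complement in $Q_3$, and so on along the chain. This suggests a case analysis on which of $\alpha_2+\alpha_3$ or $\beta_2+\beta_3$ is at least half of $|Q_2|+|Q_3|$, combined with the symmetry $z\mapsto -z$ (allowed because $X_{t+1}$ is symmetric) to reduce the number of cases. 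The constant $1/4$ leaves some slack for boundary and rounding effects from $M/2$ not being an integer. If the quarter decomposition turns out to be too lossy, I would fall back on the continuous argument of \cite{RSSChuna}, comparing intervals $[a-M,a]$ and $[a,a+M]$ and choosing for each $a$ the half-window that lies inside $[-M,M]$, then discretizing it.
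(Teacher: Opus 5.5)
Your reduction is sound. The formula for the conditional increment, the restriction to good pairs inside $\{-M,\dots,M\}$, and the observation that $P\ge v(2M+1-v)/4$ would suffice are all correct. The gap is the final inequality, and it is not only unproven but false for your partition when $M$ is odd.

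\textbf{Counterexample.} Take $M=3$, so $Q_1=\{-3,-2\}$, $Q_2=\{-1\}$, $Q_3=\{0,1\}$, $Q_4=\{2,3\}$. Take $A=\{0,1,2,3\}$, which is realized by $\Delta=0$, $X_1=1$, $X_2=2$. Then $\alpha=(0,0,2,2)$, $\beta=(2,1,0,0)$, and
\[
\sum_{|j-k|\le1}\alpha_j\beta_k=\alpha_3\beta_2=2<3=\tfrac14\bigl(\sum_j\alpha_j\bigr)\bigl(\sum_k\beta_k\bigr).
\]
It also falls short of what the lemma actually needs, namely $(2M+1)\frac v4\bigl(1-\frac v{2M}\bigr)=\frac73$. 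More generally, for $M=2m+1$ and $A=\{0,\dots,M\}$, the form equals $m(m+1)$, while the lemma requires $\frac{(4m+3)m(m+1)}{2(2m+1)}>m(m+1)$.

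\textbf{Why it fails.} When $A$ is one half of the domain, your relaxation keeps only the $Q_2\times Q_3$ pairs, about $M^2/4$. That already equals $\frac14 v(2M+1-v)$ to leading order. So there is no slack for rounding, contrary to your remark, and the lower-order terms decide. For odd $M$ your quarters have sizes $(m+1,m,m+1,m+1)$, and those terms go the wrong way. For the same reason you cannot use $z\mapsto -z$ to reduce cases: the partition is not symmetric.

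\textbf{The missing idea.} Replace the fixed quarters by a single sliding window, which is the device in the paper's proof. For $c\in\{-M,\dots,0\}$ let $W_c=\{c,\dots,c+M\}$. Any two points of $W_c$ are within distance $M$, so
\[
P\ge V_c\,(M+1-V_c),\qquad V_c:=|A\cap W_c|.
\]
As $c$ runs from $-M$ to $0$, $V_c$ changes by at most one per step. Its endpoint values sum to $v+1$, because $0\in A$ is counted in both. By discrete intermediate value, some $c$ has $V_c=v/2$ if $v$ is even, and $V_c=(v+1)/2$ if $v$ is odd. A direct computation gives
\[
V_c(M+1-V_c)-(2M+1)\tfrac v4\bigl(1-\tfrac v{2M}\bigr)=
\begin{cases}
\frac{v(2M+v)}{8M}, & v \text{ even},\\[4pt]
\frac{4M^2-2Mv+v^2+2M}{8M}, & v \text{ odd},
\end{cases}
\]
and both are nonnegative, which proves the lemma.

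It is essential that the window contain about half of $A$, not merely at least half. The paper's write-up only asserts a window with count $\ge v_t/2$. The last step, however, also needs the complementary count $M+1-V_c$ to be of order $M+1-v_t/2$, and the intermediate-value choice is what guarantees this.
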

\begin{proof}
Consider
\begin{align*}
    \mathbb{E}(v_{t+1}|X_1,X_2,\dots,X_t) &= \mathbb{E}\left(\sum_{i=-M}^M f_{t+1}(i)|X_1,X_2,\dots,X_t\right)\\
    &= \mathbb{E}\left(\sum_{i=-M}^M (f_{t}(i)+(1-f_t(i))f_t(i-X_{t+1}))|X_1,X_2,\dots,X_t\right)\\
    &= \frac{1}{2M+1}\sum_{j=-M}^M \sum_{i=-M}^M f_t(i)+(1-f_t(i))f_t(i-j)\\
    &= \sum_{i=-M}^M f_t(i)+\frac{1}{2M+1}\sum_{j=-M}^M \sum_{i=-M}^M (1-f_t(i))f_t(i-j)\\
    &= v_t+\frac{1}{2M+1} \sum_{i=-M}^M \left((1-f_t(i))\sum_{j=-M}^M f_t(i-j)\right)\\
    &= v_t+\frac{1}{2M+1} \sum_{i=-M}^M (1-f_t(i))\sum_{k=i-M}^{i+M}f_t(k)
    \end{align*}
where $k=i-j$. Hence we have
    \begin{align*}
    \mathbb{E}(v_{t+1}|X_1,X_2,\dots,X_t) \geq v_t + \frac{1}{2M+1}\sum_{i=-\frac{M}{2}+\mu}^{i=\frac{M}{2}+\mu}\left((1-f_t(i))\sum_{k=-\frac{M}{2}+\mu}^{k=\frac{M}{2}+\mu}f_t(k)\right)
\end{align*}
for some $\mu\in \{-M/2,\dots,M/2\}$ (by range restriction). Now $\exists\; \mu^*\in \{-M/2,\dots,M/2\}$ such that
\begin{equation*}
    \sum_{k=-\frac{M}{2}+\mu}^{k=-\frac{M}{2}+\mu}f_t(k)\geq \frac{v_t}{2}.
\end{equation*}
Choose $\mu=\mu^*$. Hence we get
\begin{align*}
    \mathbb{E}(v_{t+1}|X_1,X_2,\dots,X_t) &\geq v_t + \frac{1}{2M+1}\frac{v_t}{2}\sum_{i=-\frac{M}{2}+\mu^*}^{i=\frac{M}{2}+\mu^*}\left(1-f_t(i)\right)\\
    &\geq v_t+\frac{v_t}{4}\left(1-\frac{v_t}{2M}\right).
\end{align*}
\end{proof}
\noindent The main results depend on the analysis of how $v_t$ grows with $t$. It turns out that $v_t$ grows quickly to $M+1$ and then slowly rises to $2M+1$. Define $\tau_1$ as
\begin{equation*}
    \tau_1 = \min \{t\geq 0: v_t>M+1\}.
\end{equation*}
\begin{lemma}
    \label{lemma:pvt1}
    Given $\beta \in (0,1/8)$, let $p_\beta = 1-\frac{7}{8(1-\beta)}$. For all integers $0\leq t<\tau_1$ it holds that
    \begin{equation*}
        \mathbb{P}(v_{t+1}\geq v_t(1+\beta)\;|\;X_1,\dots,X_t,t\leq \tau_1)\geq p_\beta.
    \end{equation*}
\end{lemma}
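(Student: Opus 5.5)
The plan is a reverse Markov inequality applied to the nonnegative increment $v_{t+1}-v_t$, conditioned on $X_1,\dots,X_t$; this follows the continuous argument of \cite{RSSChuna}.

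\textbf{Step 1 (drift while $t<\tau_1$).} By Lemma~\ref{lemma:EVt},
\begin{equation*}
\mathbb{E}(v_{t+1}-v_t\mid X_1,\dots,X_t)\ \ge\ \frac{v_t}{4}\Bigl(1-\frac{v_t}{2M}\Bigr).
\end{equation*}
When $t<\tau_1$ we have $v_t\le M+1$. Hence $1-\frac{v_t}{2M}\ge \frac12-\frac{1}{2M}$, which is essentially $\frac12$, and the expected increment is about $v_t/8$. I expect to absorb the $\frac1{2M}$ slack into the constants, or to tighten the range-restriction step of Lemma~\ref{lemma:EVt}. The target is a lower bound of the form $\mathbb{E}(D\mid\mathcal{F}_t)\ge v_t/8$, where $D=v_{t+1}-v_t$.

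\textbf{Step 2 (upper bound on the increment).} By the recurrence \eqref{eq:rss_recurence}, $f_{t+1}(z)\le f_t(z)+f_t(z-X_{t+1})$. The new points are therefore contained in a shift of the current solvable set, so $D\le v_t$ deterministically. Equivalently, $0\le D/v_t\le 1$.

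\textbf{Step 3 (reverse Markov).} Set $Y=D/v_t\in[0,1]$, so that $\mathbb{E}Y\ge \frac18$. Splitting on the event $\{Y\ge\beta\}$ gives
\begin{equation*}
\mathbb{E}Y\le \beta\,\mathbb{P}(Y<\beta)+\mathbb{P}(Y\ge \beta)=\beta+(1-\beta)\,\mathbb{P}(Y\ge\beta),
\end{equation*}
and hence
\begin{equation*}
\mathbb{P}(Y\ge\beta)\ \ge\ \frac{1/8-\beta}{1-\beta}\ =\ 1-\frac{7}{8(1-\beta)}\ =\ p_\beta.
\end{equation*}
This is positive precisely when $\beta<1/8$, which explains the stated range of $\beta$. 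The event $\{Y\ge\beta\}$ is the event $v_{t+1}\ge(1+\beta)v_t$, and the bound holds conditionally on $X_1,\dots,X_t$ and on $\{t<\tau_1\}$. The latter event is measurable with respect to $X_1,\dots,X_t$, so conditioning on it preserves the bound.

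\textbf{Main obstacle.} The hardest part is Step 1: extracting exactly the constant $1/8$ from Lemma~\ref{lemma:EVt} despite the off-by-one mismatch between $M+1$ and $2M$. I would first try to redo the averaging in Lemma~\ref{lemma:EVt} with a window of $M+1$ points instead of $M$. If the exact constant is not obtainable, I would accept a bound $\mathbb{E}Y\ge c$ with $c$ slightly below $1/8$ and adjust $p_\beta$ accordingly; this changes only the constants in Lemma~\ref{lemma:fir} downstream.
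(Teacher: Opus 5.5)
Your route is the paper's: reverse Markov with conditional mean at least $\frac98 v_t$ and a deterministic cap of $2v_t$. However, Step~2 is false as stated. The function $f_t$ is defined on all of $\mathbb{Z}$, and subset sums can leave $\{-M,\dots,M\}$. The points added at time $t+1$ are the images under $k\mapsto k+X_{t+1}$ of solvable $k$ in the window $\{-M-X_{t+1},\dots,M-X_{t+1}\}$, and that window may contain solvable points not counted by $v_t$.

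Concretely, take $M=10$, $\Delta=0$, $X_1=7$, $X_2=8$. The solvable set is $\{0,7,8,15\}$, so $v_2=3$ and $t=2<\tau_1$. If $X_3=-10$, the new points $-10,-3,-2,5$ all lie in range; the point $5$ comes from the out-of-range sum $15$. So $D=4>v_2$. Thus $Y=D/v_t$ is not confined to $[0,1]$. The natural cap $D\le 2M+1-v_t$ only yields a success probability of order $v_t/M$, not a constant.

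The missing idea is a truncation, which is exactly what the paper's $\tilde v$ does. Let $\tilde D$ count the $i\in\{-M,\dots,M\}$ with $f_t(i)=0$, $i-X_{t+1}\in\{-M,\dots,M\}$ and $f_t(i-X_{t+1})=1$. This gives three facts:
\begin{itemize}
\item $\tilde D\le D$, so $\{\tilde D\ge \beta v_t\}\subseteq\{v_{t+1}\ge(1+\beta)v_t\}$.
\item Injectivity of the shift now legitimately gives $\tilde D\le v_t$.
\item The drift bound $\mathbb{E}(\tilde D\mid X_1,\dots,X_t)\ge \frac{v_t}{4}\bigl(1-\frac{v_t}{2M}\bigr)$ still holds.
\end{itemize}
For the third fact you cannot cite Lemma~\ref{lemma:EVt} as a black box, since it is a statement about $v_{t+1}$. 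You have to observe that its proof only counts pairs $(i,k)$ with both $i$ and $k$ in a window $\{-M/2+\mu,\dots,M/2+\mu\}\subseteq\{-M,\dots,M\}$, so it never uses an out-of-range source. With $\tilde D$ in place of $D$, your Step~3 goes through verbatim.

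Your concern in Step~1 about the $\frac{1}{2M}$ slack is legitimate. The paper's proof simply asserts the $\frac98 v_t$ bound and glosses over the same point, so that is not where your argument falls short.
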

\begin{proof}
    Define the process
    \begin{equation*}
            \tilde{v}=\sum_{i=-M}^M(f_t(i)+(1-f_t(i))f_t(i-X_{t+1})\mathbb{I}_{\{-M,\dots,M\}})
    \end{equation*}
    Clearly $\tilde{v}\leq v_{t+1}$ and $\tilde{v}\leq 2v_t$. The bound in Lemma \ref{lemma:EVt} also holds for $\tilde{v}$. Hence we have
    \begin{equation*}
        \mathbb{P}(v_{t+1}\geq v_t(1+\beta)\;|\;X_1,\dots,X_t,t\leq \tau_1)\geq \mathbb{P}(\tilde{v}\geq v_t(1+\beta)\;|\;X_1,\dots,X_t,t\leq \tau_1).
    \end{equation*}
    Now recall the reverse Markov's inequality: for any random variable $X$ such that $0\leq X\leq b$ and $0\leq a\leq\mathbb{E}[X]$, we have
    \begin{equation*}
        \mathbb{P}(X\geq a)\geq \frac{\mathbb{E}[X]-a}{b-a}.
    \end{equation*}
    Using reverse Markov's inequality, we have
    \begin{align*}
        \mathbb{P}(\tilde{v}\geq v_t(1+\beta)\;|\;X_1,\dots,X_t,t\leq \tau_1) &\geq \frac{\mathbb{E}[v_t|X_1,\dots,X_t,t<\tau_1]-v_t(1-\beta)}{2v_t-v_t(1-\beta)}\\
        & \geq \frac{\frac{9}{8}v_t-v_t(1+\beta)}{2v_t-v_t(1+\beta)}\\
        &=\frac{\frac{9}{8}-(1+\beta)}{1-\beta}\\
        &=1-\frac{7}{8(1-\beta)}.
    \end{align*}
\end{proof}
\begin{lemma}
    Let $t$ be an integer and given $\beta \in (0,1/8)$, let $p_\beta = 1-\frac{7}{8(1+\beta)}$ and $i^* = \left \lceil \frac{\log \frac{M+1)}{2\Delta+1}}{\log (1+\beta)}\right \rceil$. If $t\geq i^*/p_\beta$, then
    \begin{equation*}
        \mathbb{P}(\tau_1\leq t)\geq 1-\exp\left(-\frac{2p_\beta ^2}{t}\left(t-\frac{i^*}{p_\beta}\right)^2\right).
    \end{equation*}
    \label{lemma:fir}
\end{lemma}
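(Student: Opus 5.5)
We count ``successful'' steps. Call step $s$ (for $s<\tau_1$) a success if $v_{s+1}\ge (1+\beta)v_s$. Since $v_t$ is nondecreasing in $t$ by the recurrence, $i^*$ successes before $\tau_1$ force $v$ to exceed $M+1$. The starting value matters here. The convention $f_t(0)=1$ together with the $\Delta$-tolerance (the empty sum $\Delta$-approximates every $z$ with $|z|\le\Delta$) gives $v_0\ge 2\Delta+1$. After $i^*$ successes, therefore,
\[
v\ge (2\Delta+1)(1+\beta)^{i^*}\ge M+1 .
\]
This is exactly the definition of $i^*=\lceil \log\frac{M+1}{2\Delta+1}/\log(1+\beta)\rceil$. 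Consequently
\[
\{\tau_1>t\}\subseteq\{\text{fewer than } i^* \text{ successes among steps } 0,\dots,t-1\}.
\]
Strictly, one either works with $\ge$ in the definition of $\tau_1$ or uses one extra step; the constant $i^*$ absorbs this.

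Next we dominate the success count by a binomial. Define indicators $B_s$ for $s<t$. For $s<\tau_1$, let $B_s$ be the success indicator. For $s\ge\tau_1$, let $B_s$ be an independent Bernoulli$(p_\beta)$ coin, used only as padding. By Lemma~\ref{lemma:pvt1}, conditionally on $X_1,\dots,X_s$ and on $s<\tau_1$, the success probability is at least $p_\beta$. Hence
\[
\mathbb{E}[B_s\mid \mathcal F_s]\ge p_\beta \quad\text{for every } s,
\]
where $\mathcal F_s$ is generated by $X_1,\dots,X_s$ and the extra coins. On $\{\tau_1>t\}$ all $B_s$ are genuine success indicators, so
\[
\mathbb{P}(\tau_1>t)\le \mathbb{P}\Bigl(\sum_{s<t}B_s<i^*\Bigr).
\]
We now apply Azuma--Hoeffding to the martingale differences $B_s-\mathbb{E}[B_s\mid\mathcal F_s]$, which lie in an interval of length $1$. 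Equivalently, we couple the $B_s$ to dominate i.i.d.\ Bernoulli$(p_\beta)$ variables and use Hoeffding directly. Either route gives
\[
\mathbb{P}\Bigl(\sum_{s<t}B_s< i^*\Bigr)\le \mathbb{P}\Bigl(\sum_{s<t}(B_s-\mathbb{E}[B_s\mid\mathcal F_s])< i^*-p_\beta t\Bigr)\le \exp\Bigl(-\frac{2(p_\beta t-i^*)^2}{t}\Bigr),
\]
valid when $p_\beta t\ge i^*$. Rewriting $(p_\beta t-i^*)^2=p_\beta^2(t-i^*/p_\beta)^2$ yields exactly the claimed bound, with the hypothesis $t\ge i^*/p_\beta$ being what makes the deviation nonnegative.

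The main obstacle is the stochastic domination step. Lemma~\ref{lemma:pvt1} only gives a conditional lower bound on each step, valid while $t<\tau_1$. Therefore the padding construction after $\tau_1$ and the passage to a supermartingale/Azuma bound must be stated carefully, rather than pretending the steps are independent.

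A secondary subtlety is that the two versions of $p_\beta$ in the paper differ: Lemma~\ref{lemma:pvt1} uses $1-\frac{7}{8(1-\beta)}$, while this lemma uses $1-\frac{7}{8(1+\beta)}$. The reverse-Markov computation in Lemma~\ref{lemma:pvt1}, done correctly with $a=v_t(1+\beta)$ and $b=2v_t$, gives $\frac{1/8-\beta}{1-\beta}$. I would use whichever positive constant Lemma~\ref{lemma:pvt1} actually certifies. The argument above is insensitive to its exact value, as long as the same $p_\beta$ appears in $i^*/p_\beta$ and in the exponent.
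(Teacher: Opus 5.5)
Your proposal is correct and is essentially the paper's argument. Lemma~\ref{lemma:pvt1} gives each step a $(1+\beta)$-multiplicative success with conditional probability at least $p_\beta$, $i^*$ such successes from $v_0=2\Delta+1$ suffice, and the success count is compared with $\mathrm{Binomial}(t,p_\beta)$ and bounded by Hoeffding. The paper phrases this through geometric waiting times to cross the intervals $I_i$ and simply asserts the stochastic domination; your padding/Azuma step makes that domination rigorous, and since the one-sided bound already covers $\{\sum_{s<t}B_s\le i^*\}$, the off-by-one from the strict definition of $\tau_1$ costs nothing. Your value $\frac{1/8-\beta}{1-\beta}$ equals exactly $1-\frac{7}{8(1-\beta)}$, which is the constant the paper itself uses later ($p_\beta=1/15$ at $\beta=1/16$), so the $1+\beta$ in this statement is a typo.
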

\begin{proof}
    Divide the domain $\{0,\dots,M\}$ into intervals
    \begin{align*}
        & I_0 =\left\{1,\dots ,2\Delta+1\right\},\\
        & I_{i} =\left\{\left\lfloor(2\Delta+1)(1+\beta)^{i-1}\right\rfloor ,\dots, \left\lfloor(2\Delta+1)(1+\beta)^{i}\right\rfloor\right\}, \\
        & I_{i^*} =\left\{\left\lfloor(2\Delta+1)(1+\beta)^{i^*-1}\right\rfloor ,\dots, M+1\right\},
    \end{align*}
where $i^*$ is the smallest integer for which
\begin{align*}
    &\left\lfloor(2\Delta+1)(1+\beta)^{i^*}\right\rfloor \geq M+1\\
    \implies & i^* = \left \lceil\frac{\log \frac{M+1}{2\Delta+1}}{\log (1+\beta)}\right \rceil.
\end{align*}
We are interested in the number of steps required for $v_t$ to exit the interval $I_i$ after entering it. By Theorem \ref{lemma:pvt1}, this amount is majorised by a geometric random variable $Y_i\sim \text{Geom}(p_\beta)$. Therefore, we can conclude that $\tau_1$ is stochastically dominated by the sum of such variables, that is, for $t\in \mathbb{N}$, we have that
\begin{equation}
    \label{eq:pt1}
    \mathbb{P}(\tau_1\geq t) \leq \mathbb{P}\left(\sum_{i=1}^{i^*}Y_i \geq t\right).
\end{equation}
Let $B_t\sim \text{Binomial}(t,p_\beta)$  be a binomial random variable. For the sum of geometric random variables, it holds that $\mathbb{P}\left(\sum_{i=1}^{i^*}Y_i \leq t\right) = \mathbb{P}(B_t\geq i^*)$. Since $\mathbb{E}(B_t) = tp_\beta$, the Hoeffding's
bound for binomial random variables implies that, for all $\lambda\geq 0$, we have that $\mathbb{P}(B_t\leq tp_\beta-\lambda)\leq \exp(-2\lambda^2/t)$. Setting $t$ such that $tp_\beta - \lambda = i^*$, we obtain that
\begin{align*}
   \mathbb{P}\left(\sum_{i=1}^{i^*}Y_i \geq t\right) &\leq \mathbb{P}(B_t\leq i^*)\\
   & \leq \exp \left(-\frac{2}{t}(tp_\beta - i^*)^2\right)\\
   & = \exp \left(-\frac{2p_\beta ^2}{t}\left(t - \frac{i^*}{p_\beta}\right)^2\right)
\end{align*}
which holds as long as
\begin{equation*}
    t\geq \frac{1}{p_\beta}\left\lceil \frac{\log \frac{M+1}{2\Delta+1}}{\log (1+\beta)}\right\rceil.
\end{equation*}
The result follows by applying this to Equation \ref{eq:pt1} and considering the complementary events.
\end{proof}
Now we study the second half of the process: from the moment the volume reaches $M+1$ up to the time it gets to $2M-\Delta+1$. Consider the process $w_t = (2M+1) - v_{\tau_1+t}$.
\begin{lemma}
    \label{lemma:wt}
    For all $t\geq0$, it holds that
    \begin{equation*}
        \mathbb{E}(w_{t+1}|X_1,\dots,X_{t+\tau_1}) \leq w_t\left(1 - \frac{1}{4}\left(1-\frac{w_t}{2M+1}\right)\right).
    \end{equation*}
\end{lemma}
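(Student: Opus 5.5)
The proof rewrites Lemma~\ref{lemma:EVt} in terms of the complementary process. Write $w_t=(2M+1)-v_{\tau_1+t}$, so that $w_t=\sum_{i=-M}^{M}(1-f_{\tau_1+t}(i))$ counts the targets not yet $\Delta$-approximated. Set $s=\tau_1+t$. The recurrence~\eqref{eq:rss_recurence} gives
\begin{equation*}
w_{t+1}=w_t-\sum_{i=-M}^{M}(1-f_s(i))f_s(i-X_{s+1}).
\end{equation*}
Taking conditional expectation over the uniform $X_{s+1}$, exactly as in the first lines of the proof of Lemma~\ref{lemma:EVt}, yields
\begin{equation*}
\mathbb{E}(w_{t+1}\mid X_1,\dots,X_s)=w_t-\frac{1}{2M+1}\sum_{i=-M}^{M}(1-f_s(i))\sum_{k=i-M}^{i+M}f_s(k).
\end{equation*}
It therefore suffices to show that the correction term is at least $\frac{w_t}{4}\bigl(1-\frac{w_t}{2M+1}\bigr)=\frac{w_t v_s}{4(2M+1)}$. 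Equivalently, we need
\begin{equation*}
\sum_{i:\,f_s(i)=0}\;\sum_{k=i-M}^{i+M}f_s(k)\;\geq\;\frac{w_t\,v_s}{4}.
\end{equation*}

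The key point is that each window $\{i-M,\dots,i+M\}$ with $i\in\{-M,\dots,M\}$ has $2M+1$ points. Its intersection with $\{-M,\dots,M\}$ contains either $[0,M]$ (if $i\ge 0$) or $[-M,0]$ (if $i\le 0$), i.e.\ a half-range of size $M+1$.

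We then bound the inner sum in two steps.

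\textbf{Step 1.} Let $A=\sum_{k=0}^{M}f_s(k)$ and $B=\sum_{k=-M}^{0}f_s(k)$, so $A+B\ge v_s$. Every uncovered $i\ge0$ contributes at least $A$, and every uncovered $i\le 0$ contributes at least $B$.

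\textbf{Step 2.} Let $w^+$ and $w^-$ be the numbers of uncovered points in $[0,M]$ and $[-M,0]$. Then $w^++w^-\ge w_t$ (note $0$ is always covered), and the double sum is at least $w^+A+w^-B$. We must lower-bound this by $w_tv_s/4$.

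This balancing is the main obstacle. If the uncovered points all sit on one side while the covered mass sits on the other, $w^+A+w^-B$ can be small. Two facts help:
\begin{itemize}
\item $A\le M+1-w^+$ and $B\le M+1-w^-$;
\item we are past $\tau_1$, so $v_s>M+1$ and $w_t<M$.
\end{itemize}
From $A+B\ge v_s$ and $A\le M+1-w^+$ we get $B\ge v_s-(M+1)+w^+$, and symmetrically for $A$.

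First I would do the case analysis. Suppose w.l.o.g.\ $w^+\ge w_t/2$. If $A\ge v_s/4$ we are done, since then $w^+A\ge w_tv_s/8$. This falls short by a factor $2$, and I would recover it by using the windows more finely than half-ranges. The refinement: for uncovered $i\ge0$, the window actually contains $[i-M,M]$, which includes all of $[0,M]$ plus $[i-M,-1]$. Summing over uncovered $i$ and reordering gives extra mass from $B$.

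If instead $A<v_s/4$, then $B>3v_s/4$. In that case I would exploit the monotonicity of $f$ in time, together with the fact that sums of covered points shift by previous $X$'s, to conclude that the uncovered points on the positive side still see a large part of $B$ through the extended windows. Concretely, an uncovered $i\ge 0$ sees $\sum_{k=i-M}^{-1}f_s(k)$, and averaging over $i$ captures a constant fraction of $B$.

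If the constant $1/4$ proves hard to reach exactly, I would fall back on a weaker constant $c$. Lemma~\ref{lemma:sec} only needs a multiplicative contraction factor bounded away from $1$, such as the $7/8$ used there. After that, the conclusion follows by rewriting the correction term as $\frac{w_t}{4}(1-\frac{w_t}{2M+1})$ using $v_s=2M+1-w_t$.
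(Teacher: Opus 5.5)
\paragraph{The gap: the key inequality is never proved.}
Your reduction is correct. After $\tau_1$ the lemma is equivalent to $\sum_{i\in U}\sum_{k=i-M}^{i+M}f_s(k)\ge w_tv_s/4$, where $U$ is the set of uncovered points of $\{-M,\dots,M\}$. The proposal does not prove this inequality, and the bound you build on cannot prove it.

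The half-range estimate $w^+A+w^-B$ from Steps 1--2 can lose more than any constant factor. Take the covered set to be $\{-M,\dots,1\}$. This is reachable, e.g.\ for $\Delta=0$ with summands $1,-1,-2,-4,\dots$. Then $v_s=M+2>M+1$, $w_t=w^+=M-1$, $w^-=0$ and $A=2$. So $w^+A+w^-B=2(M-1)$, while $w_tv_s=(M-1)(M+2)$, a ratio of $2/(M+2)$.

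For $M>6$ this is your case $A<v_s/4$. There, the appeal to monotonicity in time and to shifts by earlier $X$'s is not an argument. In the other case, the factor-$2$ recovery is announced but not carried out. The fallback to a weaker constant $c$ does not help: it would not prove the statement as given, and the example shows that no fixed $c$ can be extracted from $w^+A+w^-B$.

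\paragraph{The missing step.}
You mention the right refinement but never use it: count the whole window. For $i\in U$, the window $\{i-M,\dots,i+M\}$ misses only the $|i|$ points at the end of $\{-M,\dots,M\}$ opposite to $i$. Hence $\sum_{k=i-M}^{i+M}f_s(k)\ge v_s-|i|$.

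Since $0\notin U$ and each value of $|i|$ is attained by at most two points, $\sum_{i\in U}|i|\le w_tM-\frac{w_t^2}{4}+\frac{w_t}{2}$. The double sum is therefore at least $w_tv_s-w_tM+\frac{w_t^2}{4}-\frac{w_t}{2}$. Substituting $v_s=2M+1-w_t$, this is at least $w_tv_s/4$ exactly when $2w_t\le 2M+1$. That holds because $w_t<M$ after $\tau_1$.

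\paragraph{Comparison with the paper.}
The paper does no such computation. It plugs $v_{t+\tau_1}=2M+1-w_t$ into Lemma~\ref{lemma:EVt} and rearranges. That is shorter, but it has two weaknesses:
\begin{enumerate}
\item It silently replaces the $2M$ in Lemma~\ref{lemma:EVt} by $2M+1$. This is a strictly stronger bound, so it is not implied by the lemma as stated.
\item The half-window argument in that lemma's proof gives only a trivial bound if the chosen window of $M+1$ points is fully covered, which is possible once $v_t\ge M+1$.
\end{enumerate}
So your direct route, completed as above, is actually the more solid proof in the regime this lemma needs.
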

\begin{proof}
    Consider
\begin{align*}
    \mathbb{E}[w_{t+1}|X_1,\dots,X_{t+\tau_1}] &= \mathbb{E}[(2M+1) - v_{t+\tau_1+1}|X_1,\dots,X_{t+\tau_1}]\\
    & = (2M+1) - \mathbb{E}(v_{t+\tau_1+1}|X_1,\dots,X_{t+\tau_1})\\
    & \leq (2M+1) - v_{t+\tau_1}\left(1+\frac{1}{4}\left(1-\frac{v_{t+\tau_1}}{2M+1}\right)\right)\\
    & = (2M+1) - (2M+1-w_t)\left(1+\frac{1}{4}\left(1-\frac{2M+1-w_t}{2M+1}\right)\right)\\
    & = (2M+1)-\left[(2M+1-w_t)+(2M+1-w_t)\frac{1}{4}\frac{w_t}{2M+1}\right]\\
    & = w_t - (2M+1-w_t)\frac{1}{4}\frac{w_t}{2M+1}\\
    & = w_t - \frac{1}{4}w_t\left(1-\frac{w_t}{2M+1}\right)\\
    & = w_t\left(1 - \frac{1}{4}\left(1-\frac{w_t}{2M+1}\right)\right).
\end{align*}
\end{proof}
Let $\tau_2$ the first time that $w_t$ gets smaller than or equal to $\Delta$, that is
\begin{equation*}
    \tau_2 = \min\{t\geq 0 :w_t \leq \Delta\}.
\end{equation*}
\begin{lemma}
    For all $t > 0$, it holds that
    \begin{equation*}
        \mathbb{P}(\tau_2\leq t) \geq 1 - \frac{1}{\Delta}\left(\frac{7}{8}\right)^t.
    \end{equation*}
    \label{lemma:sec}
\end{lemma}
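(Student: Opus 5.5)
The plan is a supermartingale-contraction argument on $w_t$ followed by Markov's inequality. Two preliminary observations come first. Since $f_{t+1}\ge f_t$ pointwise by the recurrence \eqref{eq:rss_recurence}, the volume $v_t$ is nondecreasing. Hence $w_t=(2M+1)-v_{\tau_1+t}$ is nonincreasing, and the event $\{\tau_2>t\}$ coincides with $\{w_t>\Delta\}$. It therefore suffices to show $\mathbb{P}(w_t>\Delta)\le \frac1\Delta\left(\frac78\right)^t$. Second, by the definition of $\tau_1$ we have $v_{\tau_1}>M+1$. This gives $w_0<M<\frac{2M+1}{2}$, and by monotonicity $w_t\le \frac{2M+1}{2}$ for all $t\ge 0$.

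The contraction step is as follows. Plugging $w_t\le\frac{2M+1}{2}$ into Lemma~\ref{lemma:wt} gives $1-\frac{w_t}{2M+1}\ge\frac12$, and so
\begin{equation*}
\mathbb{E}(w_{t+1}\mid X_1,\dots,X_{t+\tau_1})\le w_t\left(1-\tfrac18\right)=\tfrac78\,w_t .
\end{equation*}
Taking expectations and iterating with the tower property yields $\mathbb{E}[w_t]\le\left(\frac78\right)^t\mathbb{E}[w_0]$. Here $\tau_1$ is a stopping time, so conditioning on $X_1,\dots,X_{t+\tau_1}$ is legitimate; I would note this via the strong Markov property of the i.i.d.\ sequence. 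Markov's inequality then gives
\begin{equation*}
\mathbb{P}(\tau_2>t)=\mathbb{P}(w_t>\Delta)\le\frac{\mathbb{E}[w_t]}{\Delta}.
\end{equation*}
Taking complements gives a bound of the form $1-\frac{1}{\Delta}\left(\frac78\right)^t\cdot \mathbb{E}[w_0]$.

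The main obstacle is the prefactor. A naive bound gives $\mathbb{E}[w_0]\le M$, not $1$, so Markov as above yields $\frac{M}{\Delta}\left(\frac78\right)^t$ rather than the stated $\frac1\Delta\left(\frac78\right)^t$. To match the statement I would first try to run the argument on the normalized deficit $w_t/(2M+1)\in[0,\tfrac12]$, which has initial value at most $1$ and still contracts by $\frac78$. I would then need to compare with the threshold at the level that the paper's proof sketch uses for $\tau_2$, namely $w_t\le 2M-\Delta/2+1$ with prefactor $\frac{1}{2M+1-\Delta/2}$. Concretely, I would check whether, in the normalization where volumes are measured relative to $2M+1$, the threshold $\Delta$ should be read as the corresponding normalized tolerance. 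In that reading the initial normalized deficit is $\le 1$ and Markov gives exactly $\frac1\Delta\left(\frac78\right)^t$.

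If that reading fails, the fallback is to keep the contraction argument as is and absorb the extra factor. Since $M\le 2^{t'}$ for $t'=O(\log M)$, the factor $M$ can be absorbed into $\left(\frac78\right)^{t/2}$ once $t\gtrsim C\log\frac{M+1}{\Delta+1}$. This suffices for the use of the lemma in Theorem~\ref{thm:rss_main}, where the bound appears with exponent $t/2$.
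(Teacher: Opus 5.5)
Your skeleton (the $\frac78$ contraction from Lemma~\ref{lemma:wt} via $w_t<M<\frac{2M+1}{2}$, then the tower property over the stopped filtration, then Markov) is exactly the paper's. Your diagnosis of the prefactor is also correct, and it is precisely where the paper's own proof breaks. The paper obtains $\frac1\Delta$ by asserting $\mathbb{E}[w_0]\le\frac12$. That is a normalization carried over from the continuous setting, where volumes are fractions of an interval. Here $w_0$ is an integer count that can be of order $M$; in the example below, $w_0\ge M/2$. Your first repair does not work either. Dividing $w_t$ by $2M+1$ divides the threshold too, so Markov returns $\frac{2M+1}{2\Delta}\left(\frac78\right)^t$. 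The other readings fail as well. Comparing the normalized deficit with $\Delta$ itself, or using the main-text threshold $w_t\le 2M-\Delta/2+1$, gives a condition already met at time $0$ (since $w_0<M$). So neither defines the $\tau_2$ of the statement.

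In fact no argument can give the stated bound, because it is false. Take $M$ even, $\Delta=M/2$, $t=1$. Then $f_0=\mathbb{I}_{\{|z|\le M/2\}}$ and $v_0=M+1$. Suppose $|X_1|\ge 1$ and $|X_1|+|X_2|<M/2$. Then $\tau_1=1$, and the covered sets after $X_1$ and after $X_2$ are intervals of $M+|X_1|+1$ and $M+|X_1|+|X_2|+1$ points inside $\{-M,\dots,M\}$. Hence $w_1=M-|X_1|-|X_2|>\Delta$. Counting such pairs,
\[
\mathbb{P}(\tau_2>1)\ \ge\ \frac{2(M/2-1)^2}{(2M+1)^2}\ \longrightarrow\ \frac18 .
\]
The lemma instead claims $\mathbb{P}(\tau_2>1)\le\frac{7}{8\Delta}=\frac{7}{4M}$; for $M=100$ this is about $0.119$ versus $0.0175$. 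The statement is also undefined for $\Delta=0$, which Theorem~\ref{thm:rss_main} allows.

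What your argument does prove, using integrality to write $\{w_t>\Delta\}=\{w_t\ge\Delta+1\}$, is
\[
\mathbb{P}(\tau_2>t)\ \le\ \frac{\mathbb{E}[w_0]}{\Delta+1}\left(\frac78\right)^t\ \le\ \frac{M}{\Delta+1}\left(\frac78\right)^t ,
\]
valid for every $\Delta\ge 0$. Your fallback is the right repair: absorb the factor $\frac{M}{\Delta+1}$ into half of the exponent once $t\ge C\log\frac{M+1}{\Delta+1}$. With adjusted constants, that is all Theorem~\ref{thm:rss_main} needs. But it is a corrected lemma, not the lemma as stated.
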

\begin{proof}
    Using $2M+1-w_t = v_{t+\tau_1}> M+1$ to Lemma \ref{lemma:wt}, we get
    \begin{equation}
        \label{eq:ewt}
        \mathbb{E}(w_{t+1}|X_1,\dots,X_{t+\tau_1}) \leq \frac{7}{8}w_t.
    \end{equation}
    Moreover, from the conditional expectation theory, for any two random variables $X$ and $Y$ , we have $\mathbb{E}[\mathbb{E}[X|Y]] = \mathbb{E}[X]$. Form this and Eq. \ref{eq:ewt} we get
    \begin{equation*}
        \mathbb{E}[w_t] = \mathbb{E}[\mathbb{E}(w_{t+1}|X_1,\dots,X_{t+\tau_1})] \leq \frac{7}{8}\mathbb{E}[w_{t-1}],
    \end{equation*}
    which, by recursion, yields that
    \begin{equation*}
        \mathbb{E}[w_t]\leq \left(\frac{7}{8}\right)^t\mathbb{E}[w_0] \leq \frac{1}{2}\left(\frac{7}{8}\right)^t.
    \end{equation*}
    Finally, by Markov’s inequality,
    \begin{equation*}
        \mathbb{P}(\tau_2\geq t) \leq \mathbb{P}(w_t\geq \Delta/2) \leq \frac{2\mathbb{E}[w_t]}{\Delta} \leq \frac{1}{\Delta}\left(\frac{7}{8}\right)^t,
    \end{equation*}
    and the result follows from considering the complementary event.
\end{proof}
Define $\tau = \tau_1+\tau_2$, the first time at which the process $v_t$ reaches at least $2M+1-\Delta$/2.
\begin{theorem}
    Let $\Delta\in \{0,\dots,M\}$.  There exist constants $C'>0$ and $\kappa>0$ such that for every $t\geq C' \log \frac{M+1}{\Delta+1}$, it holds that
    \begin{equation*}
        \mathbb{P}(\tau\leq t) \geq 1-2\exp\left[-\frac{1}{\kappa t}\left(t-C'\log\frac{M+1}{\Delta+1}\right)^2\right].
    \end{equation*}
\end{theorem}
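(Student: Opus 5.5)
We combine the two phase estimates, Lemma~\ref{lemma:fir} and Lemma~\ref{lemma:sec}, by splitting the time budget in half. Since $\tau=\tau_1+\tau_2$, we have $\{\tau_1\le t/2\}\cap\{\tau_2\le t/2\}\subseteq\{\tau\le t\}$. A union bound then gives
\begin{equation*}
\mathbb{P}(\tau> t)\le \mathbb{P}(\tau_1> t/2)+\mathbb{P}(\tau_2> t/2).
\end{equation*}
Lemma~\ref{lemma:sec} is stated for the process started at $\tau_1$ and uses only the conditional drift of Lemma~\ref{lemma:wt}. It therefore applies after the stopping time $\tau_1$ regardless of its value, so no independence between the two phases is needed; the strong Markov structure of $(f_t)$ in the i.i.d.\ $X_i$ suffices.

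For the first phase we fix a concrete $\beta\in(0,1/8)$, say $\beta=1/16$, so that $p_\beta$ is an absolute positive constant and $\log(1+\beta)=\log\frac{17}{16}$. We then note that $i^*\le 1+\log\frac{M+1}{2\Delta+1}/\log\frac{17}{16}\le C_1\log\frac{M+1}{\Delta+1}+1$, using $2\Delta+1\ge\Delta+1$. If the logarithm is tiny (e.g.\ $\Delta$ close to $M$), the $+1$ from the ceiling must be absorbed. We handle this with a case split: when $\frac{M+1}{\Delta+1}\le 2$ we can either note that the statement is trivial for the chosen constants, or enlarge $C'$ and use $\log\frac{M+1}{\Delta+1}\ge \log(1+1/M)$. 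The cleanest option is to note that $\tau_1$ is at most $1$ or $2$ in that regime. Applying Lemma~\ref{lemma:fir} with $t/2$ in place of $t$ gives
\begin{equation*}
\mathbb{P}(\tau_1>t/2)\le \exp\!\left(-\frac{4p_\beta^2}{t}\Bigl(\tfrac t2-\tfrac{i^*}{p_\beta}\Bigr)^2\right)=\exp\!\left(-\frac{p_\beta^2}{t}\Bigl(t-\tfrac{2i^*}{p_\beta}\Bigr)^2\right),
\end{equation*}
valid once $t\ge 2i^*/p_\beta$, which holds whenever $t\ge C'\log\frac{M+1}{\Delta+1}$ for $C'\ge 2C_1/p_\beta$ (plus the adjustment above). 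Since $\frac{2i^*}{p_\beta}\le C'\log\frac{M+1}{\Delta+1}$, the quantity $(t-2i^*/p_\beta)^2$ is at least $(t-C'\log\frac{M+1}{\Delta+1})^2$ in this range, and the first term has the desired form with $\kappa\ge 1/p_\beta^2$.

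The second term is $\frac{1}{\Delta}(7/8)^{t/2}=\frac1\Delta e^{-ct}$ with $c=\frac12\log\frac87$. We bound it by $\exp[-\frac{1}{\kappa t}(t-C'L)^2]$, where $L=\log\frac{M+1}{\Delta+1}$. Since $(t-C'L)^2/t\le t$, it is enough to take $\kappa\ge 1/c$, provided $1/\Delta\le 1$. Summing the two terms gives the factor $2$. The main obstacle I expect is the degenerate case $\Delta=0$ (and the factor $1/\Delta$ in general). Lemma~\ref{lemma:sec} as stated blows up there. For $\Delta=0$ I would instead use that $w_t$ is integer-valued, so $\mathbb{P}(w_t\ge 1)\le \mathbb{E}[w_t]$. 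Then $\mathbb{E}[w_0]\le M$ gives a factor $M(7/8)^{t/2}$, and this must be absorbed using $t\ge C'\log(M+1)$ with $C'$ large enough. The resulting bound is $e^{-c't}$ for some $c'>0$, again at most $\exp[-\frac{1}{\kappa t}(t-C'L)^2]$ after enlarging $\kappa$. Choosing $C'$ and $\kappa$ as the maxima of the constants required in all cases finishes the proof.
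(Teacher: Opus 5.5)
Your proposal is correct and is essentially the paper's proof: halve the time budget, apply Lemma~\ref{lemma:fir} with $\beta=1/16$ and Lemma~\ref{lemma:sec}, take a union bound, and absorb both tails into one exponential. Your $\Delta=0$ argument (Markov on the integer event $w_s\ge\Delta+1$ with $\mathbb{E}[w_0]\le M$, then absorbing $\log\frac{M}{\Delta+1}$ via $t\ge C'\log\frac{M+1}{\Delta+1}$, which the paper's own absorption of $\log\frac{1}{\Delta}$ tolerates) should be used for every $\Delta$ rather than the shortcut $\frac{1}{\Delta}\le 1$, because that prefactor rests on $\mathbb{E}[w_0]\le\frac12$ in Lemma~\ref{lemma:sec}'s proof, which fails in the discrete setting where $w_0$ can be of order $M$; and for the ceiling in $i^*$, the clean split is that $i^*\ge 1$ forces $2\Delta<M$, hence $\frac{M+1}{\Delta+1}\ge 2$ and the extra $+1$ is absorbed into $C'$, while otherwise $i^*\le 0$ and there is nothing to absorb (your alternatives fail as stated, since $X_{t+1}=0$ leaves $v_t$ unchanged so $\tau_1$ is not deterministically at most $2$, and $\log\frac{M+1}{\Delta+1}\ge\log(1+\frac{1}{M})$ gives no $M$-independent constant).
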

\begin{proof}
    Let $\beta = \frac{1}{16}$ and $p_{\beta} = 1-\frac{7}{8(1-\beta)} = \frac{1}{15}$. Now we use Lemma \ref{lemma:fir} and \ref{lemma:sec}. Of course for Lemma \ref{lemma:fir}, we assume $t\geq \frac{1}{p_\beta}\left\lceil \frac{\log \frac{M+1}{2\Delta+1}}{\log (1+\beta)}\right\rceil$. We have
    \begin{align}
    \mathbb{P}(\tau\leq t) 
        &= \mathbb{P}(\tau_1+\tau_2\leq t)\notag\\
    & \geq \mathbb{P}(\tau_1 \leq t/2, \tau_2\leq t/2)\notag\\
    & \geq \mathbb{P}(\tau_1 \leq t/2) + \mathbb{P}(\tau_2\leq t/2)-1\notag\\
    & \geq 1-\exp\left(-\frac{p_\beta ^2}{t}\left(t-\frac{2}{p_\beta}\left\lceil \frac{\log \frac{M+1}{2\Delta+1}}{\log (1+\beta)}\right\rceil\right)^2\right) - \frac{1}{\Delta}\left(\frac{7}{8}\right)^{t/2}\notag\\
    & = 1-\exp\left(-\frac{1}{15^2t}\left(t-30\left\lceil \frac{\log \frac{M+1}{2\Delta+1}}{\log \frac{17}{16}}\right\rceil\right)^2\right)- \frac{1}{\Delta}\left(\frac{7}{8}\right)^{t/2},
    \label{eq:tau_tail_bound}
    \end{align}
    where the second inequality holds by the union bound. Now take $t\geq \frac{60}{\log \frac{17}{16}}\log \frac{M+1}{2\Delta+1}$, it follows that
    \begin{equation*}
        \exp\left(-\frac{1}{15^2t}\left(t-30\left\lceil \frac{\log \frac{M+1}{2\Delta+1}}{\log \frac{17}{16}}\right\rceil\right)^2\right) \leq \exp\left(-\frac{1}{15^2t}\left(t-\frac{60}{\log \frac{17}{16}} \log \frac{M+1}{\Delta+1} \right)^2\right).
    \end{equation*}
    Now, consider the second exponential term in Eq. \ref{eq:tau_tail_bound}. It holds that
    \begin{align*}
        \frac{1}{\Delta}\left(\frac{7}{8}\right)^{t/2} &= \exp \left[\log \frac{1}{\Delta}-\frac{t}{2}\log \frac{8}{7}\right]\\
        & \leq \exp \left[\log \frac{M+1}{\Delta+1}-\frac{t}{15}\right]\\
        & = \exp\left[-\frac{1}{15}\frac{1}{t-15\log \frac{M+1}{\Delta+1}}\left(t-15\log \frac{M+1}{\Delta+1}\right)^2\right].
    \end{align*}
    Moreover, for $t\geq 15\log \frac{M+1}{\Delta+1}$,
    \begin{align*}
        \exp\left[-\frac{1}{15}\frac{1}{t-15\log \frac{M+1}{\Delta+1}}\left(t-15\log \frac{M+1}{\Delta+1}\right)^2\right] &\leq \exp\left[-\frac{1}{15t}\left(t-15\log \frac{M+1}{\Delta+1}\right)^2\right]\\
        &\leq \exp\left[-\frac{1}{15^2t}\left(t-\frac{60}{\log \frac{17}{16}}\log \frac{M+1}{\Delta+1}\right)^2\right].
    \end{align*}
    Altogether, we have that
    \begin{equation*}
        \exp\left[-\frac{p_\beta ^2}{t}\left(t-\frac{2}{p_\beta}\left\lceil \frac{\log \frac{M+1}{2\Delta+1}}{\log (1+\beta)}\right\rceil\right)^2\right] + \frac{1}{2M+1-\Delta}\left(\frac{7}{8}\right)^{t/2} \leq 2\exp\left[-\frac{1}{15^2t}\left(t-\frac{60}{\log \frac{17}{16}}\log \frac{M+1}{\Delta+1}\right)^2\right].
    \end{equation*}
    The required result now follows by putting $\kappa = 15^2$ and $C' = 60/\log(17/16)$.
\end{proof}

    \section{Notation Table}
    \label{app:not}
    \begin{table}[h!]
\centering
\renewcommand{\arraystretch}{1.3}
\begin{tabular}{cl}
\hline
\textbf{Notation} & \textbf{Description} \\
\hline
$w, y$ & Scalars (lowercase letters) \\
$\mathbf{v}$ & Vectors (bold lowercase letters) \\
$v_i$ & $i^{\text{th}}$ component of vector $\mathbf{v}$ \\
$\mathbf{M}$ & Matrices (bold uppercase letters) \\
$\mathbf{W} \in \mathbb{R}^{d_1 \times d_2}$ & Matrix with dimensions $d_1 \times d_2$ \\
$\|\mathbf{v}\|$ & $\ell_2$ norm of vector $\mathbf{v}$ \\
$S_\delta := \{-1, -1+\delta, \dots, 1\}$ & Finite grid set, with $\delta = 2^{-k},\ k \in \mathbb{N}$ \\
$b \in S_\delta$ & Real number $b$ has precision $\delta$ \\
$S_\delta^d$ & $d$-fold Cartesian product of $S_\delta$ \\
$C, C'$ & Positive absolute constants \\
$f : \mathbb{R}^{d_0} \to \mathbb{R}^{d_\ell}$ & $\ell$-layer neural network \\
$\mathbf{W}_i \in \mathbb{R}^{d_i \times d_{i-1}}$ & Weight matrix of $i$-th layer \\
$\sigma(x) = \max(0,x)$ & ReLU activation function \\
$\sigma(\mathbf{x})$ & Component-wise application: $v_i = \sigma(x_i)$ \\
\hline
\label{tab:notation}
\end{tabular}
\caption{Summary of notation.}
\end{table}

    \section{Taxonomy of previous work}
\label{app:taxonomy}

Table~\ref{tab:comp} presents a qualitative comparison with prior work, highlighting the strengths of our approach. It should be noted, however, that the results in Table~\ref{tab:comp} are subject to varying assumptions, so a direct comparison can be tricky. We now offer a more detailed account of the comparison. First, it should be noted that in \cite{multiprize, Sreenivasan}, only the larger network is quantized, and only to the binary case rather than arbitrary precision. Furthermore, in the last column of Table~\ref{tab:comp}, we compare whether the failure probability decays to zero exponentially with respect to overparameterization. Works in which the required overparameterization is already polynomial (as opposed to logarithmic) such as \cite{slthproof} and \cite{multiprize} are therefore marked with a cross in the last column.
    
    \section{SLTH-Quantization Results}
\label{app:slth}
    
In this section, we provide the details of the proof of Theorem~\ref{thm:main}. 

\begin{lemma}[Approximating a univariate linear function]
\label{lem:single_neuron}
Let $d$ be some positive integer, $\epsilon$ be any real number in $(0,1)$ and $\delta = \epsilon/2\Delta d$ where $\Delta\in \mathbb{Z}^+$. Consider a randomly initialized neural network $ g(\mathbf{x})=\mathbf{v}^T\sigma(\mathbf{M}\mathbf{x}) $ with $ \mathbf{x}\in \mathbb{R}^d $ such that
$ \mathbf{M}\in S_{\delta}^{C\log\left(\frac{\frac{1}{\delta}+1}{\Delta+1}\right)\times d} $ and $ \mathbf{v}\in {S_\delta}^{C\log\left(\frac{\frac{1}{\delta}+1}{\Delta+1}\right)} $, where each
weight is initialized independently from the uniform distribution over $S_\delta$.

Let $ \hat g(\mathbf{x})=(\mathbf{s}\odot \mathbf{v})^T\sigma((\mathbf{T}\odot \mathbf{M})\mathbf{x}) $ be the pruned network for a choice of binary vector $ \mathbf{s} $
and matrix $ \mathbf{T} $. If $ f_{\mathbf{w}}(\mathbf{x})=\mathbf{w}^T \mathbf{x}, \mathbf{w}\in S_\delta^d $ is the linear function, $n>C\log \frac{\frac{1}{\delta}+1}{\Delta+1}$, then with probability at least
\begin{equation*}
        1-4d\exp\left(-\frac{1}{\kappa n}\left(n-C'\log\frac{\frac{1}{\delta}+1}{\Delta+1}\right)^2\right),
\end{equation*}
we have for any $\mathbf{w}\in S_\delta^d$
\begin{equation*}
\exists\; \mathbf{s},\mathbf{T}:\ \sup_{\|\mathbf{x}\|_\infty\le 1}\big|f_{\mathbf{w}}(\mathbf{x})-\hat g(\mathbf{x})\big|<\varepsilon.
\end{equation*}
\end{lemma}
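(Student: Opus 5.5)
The plan is to follow the neuron-level argument of \cite{pensia}: decompose $f_{\mathbf{w}}(\mathbf{x})=\sum_{j=1}^d w_j x_j$ and devote a disjoint block of hidden units to each coordinate $j$, so that each block reduces to the single-weight setting of Lemma~\ref{lem:singlelink}. Concretely, I will partition the hidden units (rows of $\mathbf{M}$, entries of $\mathbf{v}$) into $d$ groups of size $2n$. This assumes the hidden width is $2dn$ rather than the $C\log(\cdot)$ written in the dimension of $\mathbf{M}$; the latter seems to suppress the factor $d$, which is absorbed as in Theorem~\ref{thm:main}. In group $j$, the mask $\mathbf{T}$ keeps only column $j$ of $\mathbf{M}$ and zeroes out every other column. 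Group $j$ then computes $\mathbf{v}^{(j)T}\sigma(\mathbf{u}^{(j)}x_j)$, where $\mathbf{u}^{(j)}$ is the $j$-th column restricted to the group. This is exactly a network $g(x)=\mathbf{v}^T\sigma(\mathbf{u}x)$ with i.i.d.\ uniform entries from $S_\delta$, as in Lemma~\ref{lem:singlelink}.

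Next, within each group I choose $\mathbf{s}$ and the kept entries of column $j$ as provided by Lemma~\ref{lem:singlelink}. Because the row supports are disjoint, the pruned network equals $\sum_j \hat g_j(x_j)$. For every $w_j\in S_\delta$, Lemma~\ref{lem:singlelink} gives $\sup_{|x_j|\le1}|w_jx_j-\hat g_j(x_j)|\le 2\Delta\delta$ on a good event $E_j$. Since $\|\mathbf{x}\|_\infty\le1$ implies $|x_j|\le1$, the triangle inequality bounds the total error by $d\cdot 2\Delta\delta=\epsilon$, because $\delta=\epsilon/(2\Delta d)$. The event $E_j$ depends only on group $j$'s weights and its guarantee holds simultaneously for all $w_j\in S_\delta$, so the masks may be chosen after $\mathbf{w}$ is revealed. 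A union bound over the $d$ events $E_j$, each failing with probability at most $4\exp(-\frac{1}{\kappa n}(n-C'\log\frac{1/\delta+1}{\Delta+1})^2)$, then yields the stated probability $1-4d\exp(\cdot)$.

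I expect the main obstacle to be the bookkeeping rather than any single estimate. First, the error must be made strict ($<\epsilon$): the single-weight steps give strict inequalities $<\Delta\delta$, so the sum is $<2d\Delta\delta=\epsilon$, and I will track this strictness through the argument. Second, Lemma~\ref{lem:singlelink} must be applied with the grid $S_\delta$ for the new, smaller $\delta$. Its hypothesis there is written as $\delta=\epsilon/2\Delta$, so I will reinvoke it with $\epsilon'=\epsilon/d$, which gives exactly $\delta=\epsilon'/2\Delta$, and the requirement $n\ge C\log\frac{1/\delta+1}{\Delta+1}$ is then the hypothesis of the present lemma. Finally, I will check that zeroing whole columns via $\mathbf{T}$, rather than only entries of a vector, is a legal mask. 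It is, since $\mathbf{T}$ is an arbitrary binary matrix.
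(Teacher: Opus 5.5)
Your proposal is correct and follows the paper's proof essentially step for step. Prune $\mathbf{M}$ to a block-diagonal matrix so the network splits as $\sum_i(\mathbf{s}_i\odot\mathbf{v}_i)^T\sigma(\mathbf{u}_i x_i)$, apply Lemma~\ref{lem:singlelink} to each block uniformly over $w\in S_\delta$, take a union bound over the $d$ events, and sum $d$ errors of $2\Delta\delta$ to get $2d\Delta\delta=\epsilon$. You also make explicit two points the paper leaves implicit: the hidden width should carry a factor $d$, and the single-weight lemma is reinvoked with $\epsilon/d$. One caveat: your strict ``$<\epsilon$'' rests on the strict bounds asserted inside the proof of Lemma~\ref{lem:singlelink}, which Theorem~\ref{thm:rss_main} (tolerance $\le\Delta$) does not literally give; the paper itself only reaches $\le 2d\Delta\delta$.
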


\begin{proof}
We will approximate $ \mathbf{w}^T \mathbf{x} $ coordinate-wise.
\medskip

\noindent\textbf{Step 1: Pre-processing $\mathbf{M}$.} We first begin by pruning $\mathbf{M}$ to create a block-diagonal matrix $\mathbf{M}'$. Specifically, we create $\mathbf{M}'$ by only keeping the following non-zero entries:
\begin{equation*}
\mathbf{M}'=
\begin{bmatrix}
\mathbf{u}_1 & 0 & \cdots & 0\\[4pt]
0 & \mathbf{u}_2 & \cdots & 0\\
\vdots & \vdots & \ddots & \vdots\\
0 & 0 & \cdots & \mathbf{u}_d
\end{bmatrix},
\qquad\text{where } \mathbf{u}_i\in\mathbb{R}^{C\log\left(\frac{\frac{1}{\delta}+1}{\Delta+1}\right)}.
\end{equation*}
We choose the binary matrix $\mathbf{T}$ to be such that $\mathbf{M}' = \mathbf{T}\odot \mathbf{M}$. We also decompose $\mathbf{v}$ and $\mathbf{s}$ as
\begin{equation*}
\mathbf{s}=\begin{bmatrix}\mathbf{s}_1\\ \mathbf{s}_2\\ \vdots \\ \mathbf{s}_d\end{bmatrix},\qquad
\mathbf{v}=\begin{bmatrix}\mathbf{v}_1\\ \mathbf{v}_2\\ \vdots \\ \mathbf{v}_d\end{bmatrix},
\qquad\text{where } \mathbf{s}_i,\mathbf{v}_i\in\mathbb{R}^{C\log\left(\frac{\frac{1}{\delta}+1}{\Delta+1}\right)}.
\end{equation*}
Using this notation, we can express our network as the following:
\begin{equation*}\label{eq:decomp}
\begin{aligned}
(\mathbf{s}\odot \mathbf{v})^T\sigma(\mathbf{M}'\mathbf{x})
&= \sum_{i=1}^d (\mathbf{s}_i\odot \mathbf{v}_i)^T \sigma(\mathbf{u}_i x_i).
\end{aligned}
\end{equation*}

\medskip

\noindent\textbf{Step 2: Pruning $\mathbf{u}$.} Let $ n=C\log\left(\frac{\frac{1}{\delta}+1}{\Delta+1}\right)$ and define the event $ E_{i,\varepsilon} $ be the following event from Lemma 2:
\begin{equation*}
E_{i}:=\Big\{
\sup_{w\in S_\delta}\ \inf_{\mathbf{s}_i\in\{0,1\}^n}\ \sup_{|x|\le 1}
\big|w x - (\mathbf{v}_i\odot \mathbf{s}_i)^T \sigma(\mathbf{u}_i x)\big|\le \Delta\delta
\Big\}.
\end{equation*}
Define the event $ E:= \bigcap_i E_{i}$, the intersection of all the events.
For each $i$, Lemma \ref{lem:singlelink} shows that event $ E_{i} $ holds with probability at least
\begin{equation*}
        1-4\exp\left(-\frac{1}{\kappa n}\left(n-C'\log\frac{\frac{1}{\delta}+1}{\Delta+1}\right)^2\right),
\end{equation*}
because the dimension of $\mathbf{v}_i$ and $\mathbf{u}_i$ is at least $ C\log\left(\frac{\frac{1}{\delta}+1}{\Delta+1}\right)$. Taking a union bound we get that the event $ E$ holds with probability at least
\begin{equation*}
        1-4d\exp\left(-\frac{1}{\kappa n}\left(n-C'\log\frac{\frac{1}{\delta}+1}{\Delta+1}\right)^2\right).
\end{equation*}
On the event $ E$, we obtain the following series of inequalities:

\begin{equation*}
\begin{aligned}
&\sup_{\mathbf{w}\in S_\delta^d}\ \inf_{\mathbf{s},\mathbf{T}}\ \sup_{\|\mathbf{x}\|_\infty\le 1}
\big|\mathbf{w}^T \mathbf{x} - (\mathbf{s}\odot \mathbf{v})^T\sigma((\mathbf{T}\odot \mathbf{M})\mathbf{x})\big| \\
&\qquad\le \sup_{\mathbf{w}\in S_\delta^d}\ \inf_{\mathbf{s}\in\{0,1\}^{nd}}\ \sup_{\|\mathbf{x}\|_\infty\le 1}
\big|\mathbf{w}^T \mathbf{x} - (\mathbf{s}\odot \mathbf{v})^T\sigma(\mathbf{M}' \mathbf{x})\big|
\quad\text{(Pruning $\mathbf{M}$ according to Step 1.)}\\
&\qquad= \sup_{\mathbf{w}\in S_\delta^d}\ \inf_{\mathbf{s}_1,\dots,\mathbf{s}_d\in\{0,1\}^n}\ \sup_{\|\mathbf{x}\|_\infty\le 1}
\left|\sum_{i=1}^d w_i x_i - \sum_{i=1}^d (\mathbf{s}_i\odot \mathbf{v}_i)^T \sigma(\mathbf{u}_i x_i)\right|
\quad\text{(Using decomposition above)}\\
&\qquad\le \sup_{\mathbf{w}\in S_\delta^d}\ \inf_{\mathbf{s}_1,\dots,\mathbf{s}_d\in\{0,1\}^n}\ \sup_{\|\mathbf{x}\|_\infty\le 1}
\sum_{i=1}^d \big| w_i x_i - (\mathbf{s}_i\odot \mathbf{v}_i)^T \sigma(\mathbf{u}_i x_i)\big|\\
&\qquad= \sum_{i=1}^d \sup_{|w_i|\le 1}\ \inf_{\mathbf{s}_i\in\{0,1\}^n}\ \sup_{|x_i|\le 1}
\big| w_i x_i - (\mathbf{s}_i\odot \mathbf{v}_i)^T \sigma(\mathbf{u}_i x_i)\big|\\
&\qquad\le \sum_{i=1}^d 2\Delta\delta \quad\text{(By definition of the event $E$)}\\
&\qquad\le 2d\Delta\delta.
\end{aligned}
\end{equation*}

Since $\delta = \varepsilon/2d\Delta$, the result follows.
\end{proof}

\begin{lemma}[Approximating a layer]
\label{lem:layer}
Let $d_1,d_2$ be some positive integers, $\epsilon$ be any real number in $(0,1)$ and $\delta = \epsilon/2\Delta d_1d_2$ where $\Delta\in \mathbb{Z}^+$.
Consider a randomly initialized two-layer neural network $g(x)=\mathbf{N}\,\sigma(\mathbf{M}\mathbf{x})$ with $\mathbf{x}\in\mathbb{R}^{d_1}$ such that $\mathbf{N}$ has dimension
\begin{equation*}
    d_2 \times C d_1 \log\left(\frac{\frac{1}{\delta}+1}{\Delta+1}\right)
\end{equation*}
and $\mathbf{M}$ has dimension
\begin{equation*}
    C d_1 \log\left(\frac{\frac{1}{\delta}+1}{\Delta+1}\right) \times d_1,
\end{equation*}
where each weight is initialized independently from the uniform distribution over $S_\delta$.

Let $\widehat{g}(x) = (\mathbf{S}\odot\mathbf{N})^T \sigma\big((\mathbf{T}\odot\mathbf{M})x\big)$ be the pruned network for a choice of pruning matrices $\mathbf{S}$ and $\mathbf{T}$. If $f_{\mathbf{W}}(x)=\mathbf{W}x$ is the linear (single-layered) network, where $\mathbf{W} \in S_\delta^{d_1\times d_2}$, $n>C\log \frac{\frac{1}{\delta}+1}{\Delta+1}$, then with probability at least
\begin{equation*}
        1-4d_1d_2\exp\left(-\frac{1}{\kappa n}\left(n-C'\log\frac{\frac{1}{\delta}+1}{\Delta+1}\right)^2\right),
\end{equation*}
we have for any $\mathbf{W} \in S_\delta^{d_1\times d_2}$
\begin{equation*}
\exists\ \mathbf{S},\mathbf{T}:  \sup_{\mathbf{W} \in S_\delta^{d_1\times d_2}}\;
\sup_{x:\|x\|_\infty\le 1}
\big\| f_{\mathbf{W}}(x) - \widehat{g}(\mathbf{x})\big\| < \varepsilon.
\end{equation*}
\end{lemma}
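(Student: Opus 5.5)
We reduce the layer statement to Lemma \ref{lem:single_neuron} applied row by row, using the weight-sharing trick of \cite{pensia}. Set $n = C\log\frac{\frac{1}{\delta}+1}{\Delta+1}$.

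\textbf{Step 1: first layer.} We prune $\mathbf{M}$ once, independently of the output row, into the block-diagonal matrix $\mathbf{M}'=\mathbf{T}\odot\mathbf{M}$. Its $j$-th block is a column $\mathbf{u}_j\in S_\delta^{n}$ acting only on coordinate $x_j$, for $j=1,\dots,d_1$. The hidden representation is then
\begin{equation*}
\sigma(\mathbf{M}'\mathbf{x}) = \big(\sigma(\mathbf{u}_1x_1),\dots,\sigma(\mathbf{u}_{d_1}x_{d_1})\big),
\end{equation*}
and it is shared by all $d_2$ outputs. This is what keeps the width at $Cd_1 n$ rather than $Cd_1d_2n$.

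\textbf{Step 2: second layer.} Row $k$ of $\mathbf{N}$ splits into blocks $\mathbf{N}_{k,j}\in S_\delta^{n}$, and the $k$-th output of the pruned network is
\begin{equation*}
\sum_{j=1}^{d_1}(\mathbf{S}_{k,j}\odot\mathbf{N}_{k,j})^T\sigma(\mathbf{u}_jx_j).
\end{equation*}
For each pair $(k,j)$ we define the event $E_{k,j}$: for every $w\in S_\delta$ there exists $\mathbf{s}\in\{0,1\}^n$ with
\begin{equation*}
\sup_{|x|\le1}\big|wx-(\mathbf{s}\odot\mathbf{N}_{k,j})^T\sigma(\mathbf{u}_jx)\big|\le 2\Delta\delta.
\end{equation*}
By Lemma \ref{lem:singlelink}, whose Steps 1--4 handle the positive and negative halves of $\mathbf{u}_j$, each $E_{k,j}$ fails with probability at most $4\exp\bigl(-\frac{1}{\kappa n}(n-C'\log\frac{\frac{1}{\delta}+1}{\Delta+1})^2\bigr)$. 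A union bound over the $d_1d_2$ pairs gives the stated probability.

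\textbf{Step 3: error accounting.} On $\bigcap_{k,j}E_{k,j}$, fix $\mathbf{W}$ and choose $\mathbf{S}_{k,j}$ for the weight $W_{kj}$. Summing over $j$, each output coordinate has error at most $2d_1\Delta\delta$ uniformly over $\|\mathbf{x}\|_\infty\le1$. Across the $d_2$ coordinates the error is therefore at most $2d_1d_2\Delta\delta$ in $\ell_1$, and a fortiori in $\ell_2$. This equals $\epsilon$ when $\delta=\epsilon/2\Delta d_1d_2$. The masks depend on $\mathbf{W}$ but the good event does not, so the guarantee holds simultaneously for all $\mathbf{W}$, which is the intended order of quantifiers.

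\textbf{Main obstacle: dependence from sharing.} Because $\mathbf{u}_j$ is reused across all $k$, the events $E_{1,j},\dots,E_{d_2,j}$ are not independent. This is not actually a problem, since the union bound needs no independence; each event only requires its own marginal bound from Lemma \ref{lem:singlelink}. A subtler point is that Lemma \ref{lem:singlelink} pairs the $\mathbf{a}^+$ half of $\mathbf{u}_j$ with one half of $\mathbf{N}_{k,j}$, so the products $Z_i=N_{k,j,i}(u_j)^+_i$ must be checked to satisfy the rejection-sampling requirement of Theorem \ref{thm:rej}. That requirement holds for every fixed $k$, since the $\mathbf{N}_{k,j}$ entries are fresh and independent of $\mathbf{u}_j$.
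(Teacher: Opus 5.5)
Your proposal is correct and follows the paper's proof essentially step for step: prune $\mathbf{M}$ once into a block-diagonal $\mathbf{M}'$ shared by all outputs, split $\mathbf{N}$ into blocks $\mathbf{N}_{k,j}$, apply Lemma \ref{lem:singlelink} to each of the $d_1d_2$ pairs with tolerance $2\Delta\delta$, take a union bound, and sum the errors to $2d_1d_2\Delta\delta=\epsilon$ by bounding the $\ell_2$ norm with the $\ell_1$ norm. Your indexing, which pairs $\mathbf{N}_{k,j}$ with the shared block $\mathbf{u}_j$ acting on $x_j$, is the correct one (the paper's event writes $\sigma(\mathbf{u}_i x)$), and your observations that the union bound needs no independence and that the masks may depend on $\mathbf{W}$ while the good event does not are exactly what the argument needs.
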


\begin{proof}
Our proof strategy is similar to the proof in Lemma \ref{lem:single_neuron}.

\medskip

\noindent \textbf{Step 1: Pre-processing $\mathbf{M}$.} Similar to Lemma 2, we begin by pruning $\mathbf{M}$ to get a block-diagonal matrix $\mathbf{M}'$:
\begin{equation*}
\mathbf{M}' =
\begin{bmatrix}
\mathbf{u}_1 & 0 & \dots & 0\\[2pt]
0 & \mathbf{u}_2 & \dots & 0\\[2pt]
\vdots & \vdots & \ddots & \vdots\\[2pt]
0 & 0 & \dots & \mathbf{u}_{d_1}
\end{bmatrix},
\qquad \mathbf{u}_i \in \mathbb{R}^{C\log\left(\frac{\frac{1}{\delta}+1}{\Delta+1}\right)}.
\end{equation*}
Thus $\mathbf{T}$ is chosen so that $\mathbf{M}'=\mathbf{T}\odot\mathbf{M}$. We also decompose $\mathbf{N}$ and $\mathbf{S}$ as follows:
\begin{equation*}
\mathbf{S} =
\begin{bmatrix}
\mathbf{s}_{1,1}^T & \dots & \mathbf{s}_{1,d_1}^T\\[2pt]
\mathbf{s}_{2,1}^T & \dots & \mathbf{s}_{2,d_1}^T\\[2pt]
\vdots & \ddots & \vdots\\[2pt]
\mathbf{s}_{d_2,1}^T & \dots & \mathbf{s}_{d_2,d_1}^T
\end{bmatrix},\qquad
\mathbf{N} =
\begin{bmatrix}
\mathbf{v}_{1,1}^T & \dots & \mathbf{v}_{1,d_1}^T\\[2pt]
\mathbf{v}_{2,1}^T & \dots & \mathbf{v}_{2,d_1}^T\\[2pt]
\vdots & \ddots & \vdots\\[2pt]
\mathbf{v}_{d_2,1}^T & \dots & \mathbf{v}_{d_2,d_1}^T
\end{bmatrix},
\end{equation*}
where $\mathbf{v}_{i,j},\ \mathbf{u}_i \in \mathbb{R}^{C\log\left(\frac{\frac{1}{\delta}+1}{\Delta+1}\right)}$.

\noindent Using this notation we obtain:
\begin{equation*}
(\mathbf{S}\odot\mathbf{N})\,\sigma(\mathbf{M}'x)
=
\begin{bmatrix}
\sum_{j=1}^{d_1} (\mathbf{s}_{1,j}\odot\mathbf{v}_{1,j})^T \, \sigma(\mathbf{u}_j x_j)\\[4pt]
\vdots\\[4pt]
\sum_{j=1}^{d_1} (\mathbf{s}_{d_2,j}\odot\mathbf{v}_{d_2,j})^T \, \sigma(\mathbf{u}_j x_j)
\end{bmatrix}.
\end{equation*}

\medskip 

\noindent \textbf{Step 2: Pruning $\mathbf{N}$.} Note that $\mathbf{v}_{i,j}$ and $\mathbf{u}_i$ contain i.i.d.\ random variables from the uniform distribution. Let $n = C\log\left(\frac{\frac{1}{\delta}+1}{\Delta+1}\right)$ and define the event $E_{i,j}$ (from Lemma \ref{lem:singlelink}) by
\begin{equation*}
E_{i,j} :=
\left\{
\sup_{w\in S_\delta} \ \inf_{\mathbf{s}_{i,j}\in\{0,1\}^n}
\ \sup_{x:|x|\le 1}
\left| w x - (\mathbf{v}_{i,j}\odot\mathbf{s}_{i,j})^T \sigma(\mathbf{u}_i x)\right| \le 2\Delta\delta
\right\}.
\end{equation*}
Let $E := \bigcap_{1\le i\le d_2}\bigcap_{1\le j\le d_1} E_{i,j}$ be the intersection of all individual events. By Lemma \ref{lem:singlelink} each event $E_{i,j}$ holds with probability
\begin{equation*}
        1-4\exp\left(-\frac{1}{\kappa n}\left(n-C'\log\frac{\frac{1}{\delta}+1}{\Delta+1}\right)^2\right),
\end{equation*}
since $\mathbf{u}_i$ and $\mathbf{v}_{i,j}$ have dimension at least $C\log \left(\frac{\frac{1}{\delta}+1}{\Delta+1}\right)$. By a union bound, $E$ holds with probability at least
\begin{equation*}
        1-4d_1d_2\exp\left(-\frac{1}{\kappa n}\left(n-C'\log\frac{\frac{1}{\delta}+1}{\Delta+1}\right)^2\right).
\end{equation*}

On the event $E$ we have the following inequalities:
\begin{align*}
&\sup_{\mathbf{W}\in S_\delta^{d_1 \times d_2}}
\inf_{\mathbf{S},\mathbf{T}}
\sup_{\|x\|_\infty\le 1}
\big\| \mathbf{W}x - (\mathbf{S}\odot\mathbf{N})^T \sigma\big((\mathbf{T}\odot\mathbf{M})x\big)\big\|\\[4pt]
&\quad\le
\sup_{\mathbf{W}\in S_\delta^{d_1 \times d_2}}
\inf_{\mathbf{S}}
\sup_{\|x\|_\infty\le 1}
\big\| \mathbf{W}x - (\mathbf{S}\odot\mathbf{N})^T \sigma(\mathbf{M}'x)\big\|
\qquad\text{(Pruning $\mathbf{M}$ as in Step 1)}\\[4pt]
&\quad\le
\sup_{\mathbf{W}\in S_\delta^{d_1 \times d_2}}
\inf_{\mathbf{s}_{i,j}\in\{0,1\}^n}
\sup_{\|x\|_\infty\le 1}
\sum_{i=1}^{d_2}\left|
\sum_{j=1}^{d_1} w_{i,j} x_j
-
\sum_{j=1}^{d_1} (\mathbf{s}_{i,j}\odot\mathbf{v}_{i,j})^T \sigma(\mathbf{u}_j x_j)
\right|\\[4pt]
&\quad\le
\sup_{|w_{i,j}|\le 1}
\inf_{\mathbf{s}_{i,j}\in\{0,1\}^n} \sup_{|x_j|\le 1}
\sum_{i=1}^{d_2}\sum_{j=1}^{d_1}
\left|w_{i,j} x_j - (\mathbf{s}_{i,j}\odot\mathbf{v}_{i,j})^T \sigma(\mathbf{u}_i x_j)\right|\\[4pt]
&\quad\le
\sup_{|w_{i,j}|\le 1}
\inf_{\mathbf{s}_{i,j}\in\{0,1\}^n}
\sum_{i=1}^{d_2}\sum_{j=1}^{d_1}
\sup_{|x_j|\le 1}
\left|w_{i,j} x_j - (\mathbf{s}_{i,j}\odot\mathbf{v}_{i,j})^T \sigma(\mathbf{u}_i x_j)\right|\\[4pt]
&\quad=
\sum_{i=1}^{d_2}\sum_{j=1}^{d_1}
\sup_{|w|\le 1}
\inf_{\mathbf{s}\in\{0,1\}^n}
\sup_{|x|\le 1}
\left|w x - (\mathbf{s}\odot\mathbf{v}_{i,j})^T \sigma(\mathbf{u}_i x)\right|\\
&\quad \le 2\Delta\delta d_1 d_2.
\end{align*}

Since $\delta = \varepsilon/2d_1d_2\Delta$, the result follows.
\end{proof}
We now use Lemma \ref{lem:layer} to prove Theorem \ref{thm:main}.

\begin{proof}[Proof of Theorem \ref{thm:main}]
    
Let $\mathbf{x}_i$ be the input to the $i$-th layer of $f_{(\mathbf{W}_1,\ldots,\mathbf{W}_\ell)}(\mathbf{x})$. Thus,

\begin{enumerate}
\item $\mathbf{x}_1 = \mathbf{x}$,
\item for $1 \le i \le \ell-1$, $\mathbf{x}_{i+1} = \sigma(\mathbf{W}_i\mathbf{x}_i)$.
\end{enumerate}

Thus $f_{(\mathbf{W}_l,\ldots,\mathbf{W}_1)}(\mathbf{x}) = \mathbf{W}_l\mathbf{x}_l$.

For the $i$-th layer weights $\mathbf{W}_i$, let $\mathbf{S}_{2i}$ and $\mathbf{S}_{2i-1}$ be the binary matrices that achieve the guarantee in Lemma \ref{lem:layer}. Lemma \ref{lem:layer} states that with probability
\begin{equation*}
        1-4d_1d_2\exp\left(-\frac{1}{\kappa n}\left(n-C'\log\frac{\frac{1}{\delta}+1}{\Delta+1}\right)^2\right),
\end{equation*}
the $\delta$ is chosen such that the following event holds:
\begin{equation}
\label{eq:aeh}
\sup_{\mathbf{W} \in S_\delta^{d_{i+1}\times d_i}}
\;\;
\exists\,\mathbf{S}_{2i},\mathbf{S}_{2i-1}:\;
\sup_{\mathbf{x}:\|\mathbf{x}\|\le 1}
\left\|
\mathbf{W}_i\mathbf{x} - (\mathbf{M}_{2i}\odot \mathbf{S}_{2i})\,
\sigma\big((\mathbf{S}_{2i}\odot \mathbf{M}_{2i-1})\mathbf{x}\big)
\right\| < \frac{\varepsilon}{2\ell}.
\end{equation}
\begin{equation*}
\end{equation*}
As ReLU is 1-Lipschitz, the above event implies the following:

\begin{equation}
\label{eq:huh}
\sup_{\mathbf{W} \in S_\delta^{d_{i+1}\times d_i}}
\;\;
\exists\,\mathbf{S}_{2i},\mathbf{S}_{2i-1}:\;
\sup_{\mathbf{x}:\|\mathbf{x}\|\le 1}
\left\|
\sigma(\mathbf{W}_i\mathbf{x}) - 
\sigma\big((\mathbf{M}_{2i}\odot \mathbf{S}_{2i})\,
\sigma\big((\mathbf{M}_{2i-1}\odot \mathbf{S}_{2i-1})\mathbf{x}\big)\big)
\right\| < \frac{\varepsilon}{2\ell}.
\end{equation}

Taking a union bound, we get that with probability
\begin{equation}
        \label{eq:final}
        1-\sum_{i=0}^{\ell-1}4d_{i}d_{i+1}\exp\left(-\frac{1}{\kappa n}\left(n-C'\log\frac{\frac{1}{\delta}+1}{\Delta+1}\right)^2\right),
\end{equation}
the above inequalities hold for every layer simultaneously. For the remainder of the proof, we will assume that this event holds. For any fixed function $f$, let $g_f = g_{(\mathbf{W}_1,\ldots,\mathbf{W}_\ell)}$
be the pruned network constructed layer-wise, by pruning with binary matrices satisfying the above conditions, and let these pruned matrices be $\mathbf{M}'_i$. Let $\mathbf{x}'_i$ be the input to the $(2i-1)$-th layer of $g_f$. We note that $\mathbf{x}'_i$ satisfies the following recurrent relations:

\begin{enumerate}
\item $\mathbf{x}'_1 = \mathbf{x}$,

\item for $1 \le i \le \ell-1$, $\mathbf{x}'_{i+1} = \sigma\big(\mathbf{M}'_{2i}\,\sigma(\mathbf{M}'_{2i-1}\mathbf{x}'_i)\big).$
\end{enumerate}

\noindent Because the input $\mathbf{x}$ has $\|\mathbf{x}\| \le 1$, Equation \ref{eq:huh} also states that $\|\mathbf{x}'_i\| \le \big(1 + \frac{\varepsilon}{2\ell}\big)^{\,i-1}$. To see this, note that
we use Equation \ref{eq:huh} to get for $1 \le i \le l-1$:

\begin{equation*}
\|\sigma(\mathbf{W}_i\mathbf{x}'_i) - \mathbf{x}'_{i+1}\| \le \|\mathbf{x}'_i\|\frac{\varepsilon}{2\ell},
\end{equation*}

which implies

\begin{equation*}
\|\mathbf{x}'_{i+1}\| \le \|\mathbf{x}'_i\|\frac{\varepsilon}{2\ell} + \|\sigma(\mathbf{W}_i\mathbf{x}'_i)\|
\le \|\mathbf{x}'_i\|\frac{\varepsilon}{2\ell} + \|\mathbf{W}_i\mathbf{x}'_i\|
\le \|\mathbf{x}'_i\|\frac{\varepsilon}{2\ell} + \|\mathbf{x}'_i\|.
\end{equation*}

Applying this inequality recursively yields the claim that for $1 \le i \le l-1$,

\begin{equation*}
\|\mathbf{x}'_i\| \le \Big(1 + \frac{\varepsilon}{2\ell}\Big)^{i-1}.
\end{equation*}

Using this, we can bound the error between $\mathbf{x}_i$ and $\mathbf{x}'_i$. For $1 \le i \le l-1$, we have

\begin{equation*}
\begin{aligned}
\|\mathbf{x}_{i+1} - \mathbf{x}'_{i+1}\|
&= \big\|\sigma(\mathbf{W}_i\mathbf{x}_i) - \sigma(\mathbf{M}'_{2i}\sigma(\mathbf{M}'_{2i-1}\mathbf{x}'_i))\big\| \\
&\le \big\|\sigma(\mathbf{W}_i\mathbf{x}_i) - \sigma(\mathbf{W}_i\mathbf{x}'_i)\big\|
   + \big\|\sigma(\mathbf{W}_i\mathbf{x}'_i) - \sigma(\mathbf{M}'_{2i}\sigma(\mathbf{M}'_{2i-1}\mathbf{x}'_i))\big\| \\
&\le \|\mathbf{x}_i - \mathbf{x}'_i\| + \|\mathbf{W}_i\mathbf{x}'_i - \mathbf{M}'_{2i}\sigma(\mathbf{M}'_{2i-1}\mathbf{x}'_i)\| \\
&< \|\mathbf{x}_i - \mathbf{x}'_i\| + 
   \Big(1 + \frac{\varepsilon}{2\ell}\Big)^{\,i-1}\frac{\varepsilon}{2\ell},
\end{aligned}
\end{equation*}

where we use Equation \ref{eq:aeh}. Unrolling this we get

\begin{equation*}
\|\mathbf{x}_\ell - \mathbf{x}'_\ell\| \le
\sum_{i=1}^{\ell-1} \Big(1 + \frac{\varepsilon}{2\ell}\Big)^{\,i-1}\frac{\varepsilon}{2\ell}.
\end{equation*}

Finally using the inequality above, we get that with probability at least $1-\varepsilon$,

\begin{align*}
\|f(\mathbf{W}_\ell,\ldots,\mathbf{W}_1)(\mathbf{x}) - g(\mathbf{W}_\ell,\ldots,\mathbf{W}_1)(\mathbf{x})\|
&= \big\|\mathbf{W}_\ell\mathbf{x}_\ell - \mathbf{M}'_{2\ell}\sigma(\mathbf{M}'_{2\ell-1}\mathbf{x}'_l)\big\| \\
&\le \|\mathbf{W}_\ell\mathbf{x}_\ell - \mathbf{W}_\ell\mathbf{x}'_\ell\| + 
   \|\mathbf{W}_\ell\mathbf{x}'_\ell - \mathbf{M}'_{2\ell}\sigma(\mathbf{M}'_{2\ell-1}\mathbf{x}'_\ell)\| \\
&\le \|\mathbf{x}_\ell - \mathbf{x}'_\ell\| + 
   \Big(1 + \frac{\varepsilon}{2\ell}\Big)^{\,\ell-1}\frac{\varepsilon}{2\ell} \\
&\le \left(\sum_{i=1}^{\ell-1} \Big(1 + \frac{\varepsilon}{2\ell}\Big)^{\,i-1}\frac{\varepsilon}{2\ell}\right)
    + \Big(1 + \frac{\varepsilon}{2\ell}\Big)^{\,\ell-1}\frac{\varepsilon}{2l} \\
&\le \sum_{i=1}^{\ell} \Big(1 + \frac{\varepsilon}{2\ell}\Big)^{\,i-1}\frac{\varepsilon}{2\ell} \\
&= \Big(1 + \frac{\varepsilon}{2\ell}\Big)^{\ell} - 1 \\
&< e^{\varepsilon/2} - 1 \\
&< \varepsilon. \;\;\;\;\text{(since $\varepsilon < 1$)}
\end{align*}
Now simply using the definition of $n$ in Equation \ref{eq:final} gives the required result.
\end{proof}

    \section{Experiments}
    \label{app:expt}
    In this section, we experimentally validate our results for the subset sum problem. To illustrate, we consider a set of integers $X_1, X_2, \dots, X_n$ sampled uniformly from the set $\{-M, \dots, M\}$. We focus on the case of exact representation, i.e., $\Delta = 0$, which corresponds to the hardest scenario (allowing a tolerance generally makes the problem easier). We solve the subset sum problem for a target value $z = 42$ as an example. For given values of $M$ and $n$, we randomly sample $X_1, X_2, \dots, X_n$ and determine whether the subset sum problem has a solution. This procedure is repeated multiple times to estimate the probability that a randomly sampled instance is solvable. We then perform this experiment for several choices of $M$ and $n$. Figure~\ref{fig:1} shows the probability of solving the subset sum problem as a function of $n$ for various values of $M$. According to Theorem~\ref{thm:rss_main}, the probability of successfully solving the subset sum problem should be close to $1$ whenever $n > C \log(M+1)$. By analyzing the plots, one can estimate that $C \approx 1.5$ (See Figure \ref{fig:1}).

According to Theorem~\ref{thm:main}, the task of approximating a given target network by pruning a large network is equivalent to solving a collection of independent random subset sum problems. Previous work on SLTH \citep{cunha2022proving} explicitly construct this sparse network for a given target network by solving many of these independent random subset sum problems. As this procedure is straightforward, we do not elaborate on it here. 

\paragraph{Approximating ResNet-50.}
We now estimate the running time required to identify a subnetwork. We consider ResNet-50 \citep{resnet}, which has $25$ million ($2.5\times10^7$) parameters, and let each weight be stored using an FP8 (8-bit floating-point) format. In this setting, $M = 2^7$, and thus $n > C \log(2^7)$ is sufficient according to our theory. We solve these RSS problems with weights of ResNet-50 as targets. In our experiments, solving all corresponding subset sum instances took approximately $\sim 33$ minutes without any parallelization. This demonstrates that identifying a given subnetwork with $25$ million parameters within a larger network is computationally efficient and does not require any GPU acceleration.

\paragraph{Code availability.}
The implementation is provided in the supplementary material to support reproducibility during the review process.

\begin{figure}
    \centering
    \includegraphics[width=\linewidth]{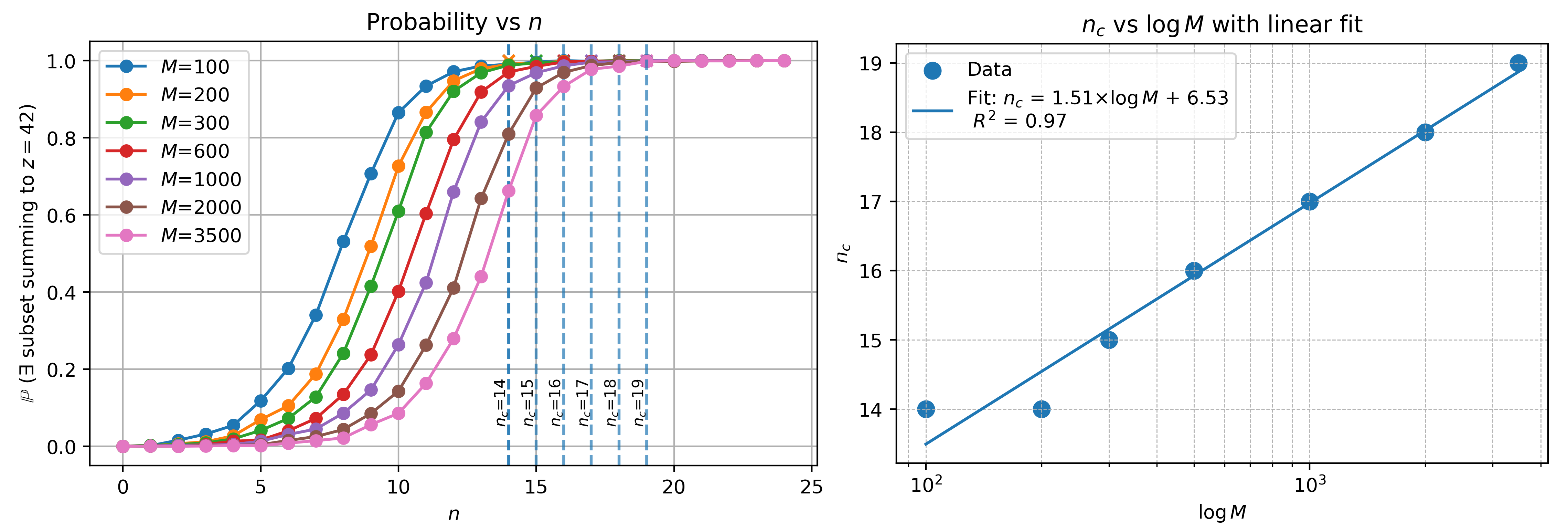}
    \caption{Left: Probability of solving a Random subset sum problem as a function of $n$ for several values of $M$ for target $z=42$. The probability was computed over 20000 trials. $n_c$ is the value of $n$ after which probability is greater than $0.99$. Right: $n_c$ vs $\log M$ with a linear fit to determine the constant $C$.}
    \label{fig:1}
\end{figure}

    \section{Rejection Sampling}
    \label{app:rej}
    Let $Y$ be a discrete random variable taking values in the finite set
\begin{equation*}
\mathcal{Y}=\{y_1, y_2,\dots, y_M\}.
\end{equation*}
Let its probability mass function (pmf) be $p_i = \mathbb{P}(Y = y_i)$ for all $i \in \{1,\dots,M\}$ and assume $p_i > 0$ for all $i$. Define
\begin{equation*}
p_{\min} := \min_{1\le i\le M} p_i > 0.
\end{equation*}

Let $U$ be the uniform distribution on $\mathcal{Y}$, i.e.
\begin{equation*}
\mathbb{P}(U = y_i)=\frac{1}{M}.
\end{equation*}

Define
\begin{equation*}
\alpha := M p_{\min},
\end{equation*}
which satisfies $\alpha \le 1$. Now define another pmf $g = (g_1,\dots,g_M)$ by
\begin{equation*}
g_i := \frac{p_i - \frac{\alpha}{M}}{1 - \alpha}
     = \frac{p_i - p_{\min}}{1 - M p_{\min}},
\qquad i=1,\dots,M.
\end{equation*}
Let $B\sim \mathrm{Bernoulli}(\alpha)$, independent of both $U$ and a random variable $G$ with pmf $g$. Then
\begin{equation*}
Y \stackrel{d}{=} 
\begin{cases}
U, & B=1,\\[4pt]
G, & B=0.
\end{cases}
\end{equation*}
Equivalently, the distribution of $Y$ satisfies
\begin{equation*}
\mathbb{P}(Y=y_i)
= \alpha\cdot \frac{1}{M}
+ (1-\alpha) g_i,
\qquad\forall i.
\end{equation*}
Thus the distribution of $Y$ can be expressed as
\begin{equation*}
Y \stackrel{d}{=} B\,U + (1-B)\,G,
\end{equation*}
where $U\sim \mathrm{Unif}(\mathcal{Y})$, $G\sim g$, and $B\sim \mathrm{Bernoulli}(\alpha)$. Consider $n$ samples $X_1,X_2,\dots,X_n$ from this distribution. We now show that if $n$ is large enough, a constant fraction of $X_i$'s are uniformly distributed.

\begin{theorem}
\label{thm:rej}
 For $n$ large enough, a constant fraction $\alpha/2 = (M p_{\min})/2$ of $X_i$'s are uniformly distributed with probability at least $1 - \exp(- n (M p_{\min})^2/2)$.
\end{theorem}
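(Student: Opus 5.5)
The plan is to realize each sample $X_i$ through the mixture decomposition above, so that the count of uniform samples becomes a binomial variable, and then apply Hoeffding's inequality. Concretely, for each $i\in[n]$ draw independent triples $(B_i,U_i,G_i)$ with $B_i\sim\mathrm{Bernoulli}(\alpha)$, $U_i\sim\mathrm{Unif}(\mathcal{Y})$, $G_i\sim g$, and set $X_i := B_iU_i+(1-B_i)G_i$. By the identity $\mathbb{P}(Y=y_i)=\alpha/M+(1-\alpha)g_i$, each $X_i$ has the law of $Y$. The triples are independent across $i$, so $(X_1,\dots,X_n)$ has the correct joint law. (When $\alpha=1$, $g$ is undefined, but then $Y$ is already uniform and the claim is trivial.)

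Let $I:=\{i: B_i=1\}$. Conditional on the whole vector $(B_1,\dots,B_n)$, the variables $\{X_i\}_{i\in I}=\{U_i\}_{i\in I}$ are i.i.d. uniform on $\mathcal{Y}$ and independent of $\{X_i\}_{i\notin I}$. This is the precise meaning of ``these $X_i$'s are uniformly distributed.'' A downstream argument, such as Step 2 of Lemma \ref{lem:singlelink}, may therefore discard the indices outside $I$ and apply Theorem \ref{thm:rss_main} to the $|I|$ uniform samples. It is important that $I$ is selected using only the coins $B_i$, which are independent of the $U_i$, rather than using the values $X_i$. This makes the conditioning legitimate. I expect this to be the step needing the most care, though it is straightforward with this coupling.

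It remains to bound $|I|=\sum_i B_i\sim\mathrm{Binomial}(n,\alpha)$ from below. Hoeffding's inequality gives $\mathbb{P}(|I|\le n\alpha-\lambda)\le\exp(-2\lambda^2/n)$. Taking $\lambda=n\alpha/2$ yields $\mathbb{P}(|I|\le n\alpha/2)\le\exp(-n\alpha^2/2)=\exp(-n(Mp_{\min})^2/2)$. So with probability at least $1-\exp(-n(Mp_{\min})^2/2)$, at least an $\alpha/2$ fraction of the samples are uniform, which is the claim. The phrase ``for $n$ large enough'' only serves to make this bound nontrivial and to make $n\alpha/2$ exceed the RSSP threshold, e.g.\ $C\log\frac{M+1}{\Delta+1}$, when the result is combined with Theorem \ref{thm:rss_main}.
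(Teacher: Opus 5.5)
Your proof is correct and follows the paper's argument: both realize each sample through the mixture $Y \stackrel{d}{=} BU+(1-B)G$, note that the number of uniform draws $\sum_i B_i$ is $\mathrm{Binomial}(n,\alpha)$, and apply Hoeffding's inequality with deviation $n\alpha/2$ to get the bound $\exp(-n\alpha^2/2)$. The paper leaves several details implicit that you make explicit: the coupling itself, the point that the retained index set is chosen from the coins $B_i$ alone (so the kept samples are conditionally i.i.d.\ uniform), and the $\alpha=1$ edge case.
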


\begin{proof}
Let $B_1,\dots,B_n$ be the i.i.d.\ Bernoulli$(\alpha)$ indicators corresponding to the $n$ samples, and define
\begin{equation*}
S := \sum_{i=1}^n B_i, \qquad \mathbb{E}[S] = n \alpha = n M p_{\min}.
\end{equation*}
By Hoeffding's inequality, for $t = n \alpha/2$,
\begin{equation*}
\mathbb{P}\left(S \le \frac{n \alpha}{2}\right)
= \mathbb{P}\left(S - \mathbb{E}[S] \le -t\right)
\le \exp\left(-\frac{2 t^2}{n}\right)
= \exp\left(-\frac{n \alpha^2}{2}\right)
= \exp\left(-\frac{n (M p_{\min})^2}{2}\right).
\end{equation*}
Thus, with probability at least $1 - \exp(- n (M p_{\min})^2/2)$,
\begin{equation*}
S \ge \frac{n \alpha}{2} = \frac{n M p_{\min}}{2}.
\end{equation*}
Hence at least a constant fraction $\alpha/2 = (M p_{\min})/2$ of the samples come from the uniform component with high probability.
\end{proof}

    \section{Acknowledgments}
    This research has been supported by the French government National Research Agency (ANR) through the UCA JEDI (ANR-15-IDEX-01), EUR DS4H (ANR-17-EURE-004), and the 3IA Côte d’Azur Investments in the Future project with the reference number ANR-23-IACL-0001.
    
%\newpage
%\input{checklist.tex}

\end{document}